\documentclass[pdflatex,sn-mathphys-ay]{sn-jnl}

\usepackage{graphicx}%
\usepackage{multirow}%
\usepackage{amsmath,amssymb,amsfonts}%
\usepackage{amsthm}%
\usepackage{mathrsfs}%
\usepackage[title]{appendix}%
\usepackage{xcolor}%
\usepackage{textcomp}%
\usepackage{manyfoot}%
\usepackage{booktabs}%
\usepackage{algorithm}%
\usepackage{algorithmicx}%
\usepackage{algpseudocode}%
\usepackage{listings}%
\def \mcs{\mathcal{S}}
\def \mca{\mathcal{A}}

\def \mcd{\mathcal{D}}

\usepackage{bm}
\usepackage{comment}
\usepackage{diagbox}
\usepackage{appendix}
\usepackage[caption=false,font=footnotesize]{subfig}
\graphicspath{{figure/}} 
\usepackage{placeins} 
\usepackage{cancel}

\theoremstyle{thmstyleone}%
\newtheorem{theorem}{Theorem}
\theoremstyle{thmstyletwo}%

\theoremstyle{thmstylethree}%
\newtheorem{prop}{Proposition}
\newfloat{function}{t}{lop}
\floatname{function}{Function}

\begin{document}

\title[Certifying Lower Bounds for Risk-Sensitive Reinforcement Learning under Adversarial State Perturbations]{Certifying Lower Bounds for Risk-Sensitive Reinforcement Learning under Adversarial State Perturbations}


\author[1]{\fnm{Tong} \sur{Li}}\email{tli31@CougarNet.UH.EDU}

\author[2]{\fnm{Saunak Kumar} \sur{Panda}}\email{spanda@CougarNet.UH.EDU}

\author*[1]{\fnm{Yisha} \sur{Xiang}}\email{yxiang4@Central.UH.EDU}

\affil[1]{\orgdiv{Department of Industrial and Systems Engineering}, 
\orgname{University of Houston}, 
\orgaddress{\street{4302 University Dr}, \city{Houston}, \postcode{77004}, \state{TX}, \country{USA}}}

\affil[2]{\orgdiv{Department of Decision and Information Science}, 
\orgname{University of Houston}, 
\orgaddress{\street{4302 University Dr}, \city{Houston}, \postcode{77004}, \state{TX}, \country{USA}}}


\abstract{Reinforcement learning (RL) agents deployed in real-world environments are often vulnerable to adversarial perturbations in state observations, creating risks in safety-critical applications. Certification methods can improve robustness against adversarial perturbations by providing lower bounds on expected cumulative rewards. Existing certification methods, however, mainly focus on risk-neutral objectives. 
In this paper, we extend certification methods to risk-sensitive objectives by establishing lower bounds on the exponential utility of cumulative rewards under $l_{p}$-norm–bounded state adversarial perturbations ($1\leq p <\infty$).
By introducing a $\phi$-divergence relaxation of the perturbation set, we formulate the risk-sensitive certification problem as a convex optimization and derive its dual to obtain a tractable approximation of the certified lower bound. 
We further propose an empirical method that improves certified lower bounds by selecting the training risk-aversion parameter $\beta$ independently of the risk level used during evaluation.
Experiments on both OpenAI Gym environments and a machine replacement problem show that, compared to risk-neutral training, risk-averse training generally yields policies with higher certified lower bounds, particularly under larger perturbation budgets. Moreover, under both risk-neutral and risk-averse evaluation settings, increasing risk aversion during training leads to non-monotonic certification performance, where certified lower bounds initially improve but eventually decrease due to overly conservative policies.}


\keywords{lower bound certificate, risk-sensitive RL, state adversarial attack, stochastic observations}



\maketitle

\section{Introduction}\label{sec:introduction}

In real-world deployments, reinforcement learning (RL) agents frequently observe perturbed states due to a variety of factors such as sensor noise, environmental uncertainty, and external interference. Empirical evidence has shown that many state-of-the-art RL algorithms are highly sensitive to minor changes in the input space, making the algorithms unreliable in real-world scenarios \citep{behzadan2017vulnerability, huang2017adversarial, pattanaik2017robust}. 
Even small perturbations, such as minor shifts in observed state variables due to environmental disturbances, can significantly alter the agent’s behavior \citep{zhang2020robust}. These vulnerabilities are especially critical in safety-sensitive domains, including autonomous driving, healthcare systems, and robotic control, where decision failures may lead to serious consequences \citep{shalev2016safe, sallab2017deep, liang2022efficient, he2022robust}.

To mitigate the impact of adversarial perturbations in state observations, several empirical defense mechanisms have been developed. Two widely used approaches are adversarial training and regularization-based methods.  
Adversarial training approaches can significantly enhance policy robustness by training agents on perturbed state observations during the learning process \citep{pattanaik2017robust, zhang2020robust, kos2017delving, Behzadan2017}. Regularization-based methods typically incorporate various forms of constraints into the RL optimization objective to improve the policy robustness \citep{shen2020deep, zhang2020robust}. 

While these empirical defense methods have demonstrated improved robustness against adversarial perturbations, several adaptive attacks have been proposed against these defense strategies. It is important to provide a theoretical guarantee (e.g., lower bound) for a trained policy to disrupt the repeated games between attackers and defenders, which is referred to as \textit{robustness certification} \citep{wu2022crop, Kumar2021PolicySF, mu2024reward}. 
Robustness certification was initially developed in the context of classification tasks, aiming to provide a certified lower bound on classification accuracy under bounded adversarial perturbations \citep{wu2022crop, Kumar2021PolicySF}.
However, certifying robustness in RL involves sequential decision-making, making it substantially more challenging than the one-step prediction in classification tasks.
Building upon the foundational work on robustness certification in classification, several studies have extended this framework to RL by developing methods that certify the robustness of smoothed policies with respect to cumulative rewards
\citep{wu2022crop,mu2024reward}.

Existing certification methods primarily focus on providing lower bounds for expected cumulative rewards, which represents a risk-neutral perspective. Safety-critical applications often require risk-averse evaluation that incorporates decision-makers' risk preferences into the objective function rather than optimizing expected performance. 
In this paper, we consider the setting where the observed state is adversarially perturbed within a bounded region, while the underlying true state of the RL environment remains unchanged. 
We extend robust certification to \textit{risk-sensitive} objectives, developing lower bound certificates for the expected utilities of cumulative rewards under bounded adversarial state perturbations. 
Specifically, our formulation builds upon the ReCePS framework \citep{mu2024reward} and extends it to support exponential utility for risk-averse performance evaluation. We formulate the lower bound certificate problem under $l_{p}$-norm bounded perturbations $(1\leq p < \infty)$ as a convex optimization problem through a $\phi$-divergence-based relaxation, and derive its dual formulation to obtain tractable and risk-aware lower bounds on the expected exponential utility of smoothed policies.
This approach bridges the gap between risk-neutral certification frameworks and the practical need for risk-aware performance guarantees.

We further develop a novel empirical method to improve the certified lower bounds of RL policies.
Our results show that risk-averse training generally leads to policies with higher certified lower bounds than risk-neutral training, particularly when the perturbation budget is large. Moreover, under both risk-neutral and risk-averse evaluation settings, increasing the training-time risk aversion initially enhances certified lower bound but eventually leads to degradation due to overly conservative policies. This non-monotonic relationship between the training-time risk-aversion level and the certified robustness highlights the importance of appropriately selecting the risk-aversion parameter during training to enhance the provable guarantees of RL policies.
To the best of our knowledge, this is the first effort to enhance the robust certificates of RL policies, rather than just establishing theoretical lower bounds on their expected total rewards.

The rest of the paper is organized as follows. Section \ref{sec: literature} reviews related work on adversarial perturbations and robustness certificates in RL. Section \ref{sec: model} introduces the RL environment model for MDPs with adversarial state perturbations and formulates the optimization problem for lower bound certificates. Section \ref{sec: main result} presents the convex optimization approach for solving the lower bound certification problem and the algorithms for computing the lower bounds efficiently. Section \ref{sec: experiment} presents experimental results on OpenAI environments and a practical machine replacement problem, demonstrating that appropriate risk-averse training parameters can effectively improve lower bound performance under adversarial state perturbations.

\section{Literature Review}\label{sec: literature}
In this section, we review relevant literature on RL robustness under adversarial perturbations. We first summarize empirical defense mechanisms that improve policy robustness through training strategies. We then discuss recent advances in robustness certification methods that provide theoretical guarantees for policy performance under perturbations.

\subsection{Empirical Defense Mechanisms for Adversarial Attacks}\label{2:sec2}

Various empirical defense methods have been developed to enhance the robustness of RL agents against adversarial perturbations in state observations. Adversarial training approaches are one of the most widely studied categories of defense mechanisms. \citet{pattanaik2017robust} propose a robust adversarial RL algorithm that trains agents on adversarially perturbed state observations during the learning process, demonstrating improved robustness against test-time perturbations. In their study, an adversarial attack is modeled as a perturbation that increases the probability of the agent selecting the worst possible action. 
The adversarial states are used during training to help the agent learn policies that are robust to observation noise and misperceptions.
\cite{behzadan2017vulnerability} propose an adversarial training strategy based on the deep Q-learning (DQN) framework that periodically augments the experience replay buffer with adversarially perturbed states. These perturbations are crafted to maximize the temporal-difference (TD) error, encouraging the agent to learn policies that are more robust to observation noise and adversarial manipulation.

Regularization-based methods have also been developed for improving training stability and enhancing robustness against strong adversarial attacks.
\cite{zhang2020robust} develop various regularizers such as Kullback–Leibler (KL) divergence-based, $l_2$-distance-based and hinge-loss-based regularizers for classical RL algorithms. 
All these three types of regularizers employ a minimax optimization structure solved via convex relaxation or Stochastic Gradient Langevin Dynamics.

In addition to these methods, \cite{shen2020deep} propose the Smooth Regularized RL (SR2L) framework, where the regularizer is modeled as a local smoothness penalty measuring the divergence between policy outputs at neighboring states. Depending on the policy type, this penalty is defined using either the squared $l_2$ norm for deterministic policies or Jeffrey’s divergence for stochastic policies, encouraging the policy to produce similar actions for similar inputs.

\subsection{Robustness Certification Methods}

The development of robustness certification techniques for RL has gained significant attention as a means of providing theoretical guarantees on policy performance under adversarial perturbations. 
\cite{lutjens20a} propose CARRL (Certified Adversarially-Robust Reinforcement Learning) that adapts robustness certification techniques from computer vision domain to RL domain. The method computes certified lower bounds on $Q$-values by propagating interval bounds through neural network layers and handling ReLU activations through linear approximations. CARRL defines an $\epsilon$-ball around each observed state and selects the action with the highest guaranteed worst-case $Q$-value within this bounded region, ensuring robust performance under adversarial perturbations. 

\cite{wu2022crop} propose the CROP (Certifying Robust Policies for RL) framework to provide robustness guarantees for deep RL against adversarial perturbations in state observations. 
This framework addresses two certification criteria through distinct approaches. For per-state action certification, the method creates a smoothed $Q$-function by averaging $Q$-values over Gaussian-perturbed input states, establishing Lipschitz continuity. A certified radius is then computed by analyzing the margin between the top two $Q$-values and applying inverse cumulative distribution functions to derive the maximum perturbation bound that guarantees unchanged action selection. For cumulative reward certification, they use the global smoothing method that samples multiple noisy trajectories to compute statistical lower bounds on rewards, and local smoothing that uses adaptive search to systematically enumerate possible action changes under perturbations, yielding tighter deterministic lower bounds.

Based on randomized smoothing techniques, \cite{Kumar2021PolicySF} develop a certification framework that provides robustness guarantees for cumulative rewards in RL deployment. 
In contrast to the CROP framework \citep{wu2022crop}, which supports per-state action certification and incorporates multiple smoothing strategies, their approach is specifically designed to certify the expected total reward using a single global smoothing mechanism. To support this framework, they propose an adaptive variant of the Neyman–Pearson lemma that reduces general probabilistic adversaries to deterministic ones. They further show that any such deterministic adversary can be transformed into a structured form that concentrates the entire perturbation budget on the first coordinate at the initial timestep. This theoretical reduction enables direct certification through isometric Gaussian smoothing, yielding provable lower bounds on cumulative rewards under norm-bounded adversarial attacks with adaptively allocated budgets.

More recent advances have focused on developing more general and efficient certification frameworks. 
Mu et al. consider the problem of certifying the expected cumulative reward under adversarial perturbations and formulate an optimization model that seeks a convex reformulation based on $f$-divergence measures to quantify distributional differences between original and perturbed trajectories \citep{mu2024reward}.
Unlike previous approaches that rely on threshold-based probability estimation or Lipschitz continuity bounds, ReCePS directly certifies cumulative reward lower bounds by solving dual optimization problems by adapting the $f$-divergence relaxation of the infinite-dimensional distribution space into finite-dimensional tractable convex programs. 
The method extends certification beyond $l_2$-norm to $l_1$-norm and $l_0$-norm constraints using appropriate divergence measures such as total variation distance and R\'enyi divergence.
This framework provides a unified approach for certifying robustness across different perturbation types while maintaining the theoretical guarantees of direct cumulative reward certification.

Our work contributes to the growing literature on robust certification for RL policies by extending certification guarantees to risk-sensitive settings. Specifically, we establish certified lower bounds for the exponential utility of cumulative rewards under bounded adversarial state perturbations. Experimental results show that risk-averse training generally leads to improved certified lower bounds compared to risk-neutral training, especially under stronger state adversarial perturbations, although excessive risk aversion can eventually degrade certification performance.

\section{Problem Formulation}\label{sec: model}

This section develops the robustness certification with adversarial state perturbations in the RL deployment. We first introduce the MDP under both risk-neutral and risk-sensitive frameworks, along with the smoothed RL policies with Gaussian noise. We further formulate lower bound certification problems that quantify the worst-case performance guarantees of a given policy under bounded adversarial perturbations in the state space.

\subsection{Preliminaries}
\noindent \textbf{MDP and risk-sensitive $Q$-function.}
A finite-horizon Markov decision process (MDP) is defined by the tuple 
$(\mathcal{S}, \mathcal{A}, P, r, T, \gamma)$, where $\mathcal{S} \subseteq \mathbb{R}^d$ is the continuous state space, $\mathcal{A}$ is the action space, $P: \mathcal{S} \times \mathcal{A} \rightarrow \Delta(\mathcal{S})$ is the transition kernel, $r: \mathcal{S} \times \mathcal{A} \rightarrow \mathbb{R}$ is a bounded reward function, $\gamma \in [0,1]$ is the discount factor, and $T<\infty$ is the finite time horizon.  Policy \(\pi \in \Pi\) is a mapping from the state space $\mathcal{S}$ to a probability distribution over $\mathcal{A}$, where $\Pi$ is the set of all possible policies.
At each time step \(t \in \{1, \dots, T\}\), the agent observes the current state \(s_t\), selects an action \(a_t \sim \pi(\cdot| s_t)\), receives a reward \(r(s_t, a_t)\), and transitions to the next state \(s_{t+1} \sim P(\cdot | s_t, a_t)\). 
In a risk-neutral environment, the performance of a policy is typically evaluated by the expected cumulative reward 
$\mathbb{E}_\pi\left[ \sum_{t=1}^{T} \gamma^t r(s_t, a_t) \mid s_1 = s \right]$.

RL algorithms often rely on the $Q$-function (or action-value function), defined at each time step $t\in \{1,\ldots,T\}$ as:
\begin{equation}\label{risk-neutral Q}
Q_t^\pi(s, a) = \mathbb{E}_\pi\left[ \sum_{k=t}^{T} \gamma^{k - t} r(s_k, a_k) \mid s_t = s, a_t = a \right].
\end{equation}
The \textit{optimal Q-function} is defined by $Q_t^*(s, a) = \max_{\pi \in \Pi} Q_t^\pi(s, a)$, 
which satisfies the \textit{Bellman optimality equation}:
\begin{equation*}
Q_t^*(s, a) = r(s, a) + \gamma \mathbb{E}_{s' \sim P(\cdot \mid s, a)} \left[ \max_{a' \in \mathcal{A}} Q_{t+1}^*(s', a') \right], 
\end{equation*}
for $t = 1, \dots, T-1$.
The optimal policy \(\pi^*_{t}\) is the greedy policy with respect to \(Q_t^*\),
i.e., $\pi^{*}_{t}(s) \in \arg\max_{a \in \mathcal{A}} Q_t^*(s, a)$, for all $s \in \mathcal{S}$ and $t \in \{1, \dots, T\}$.

To account for decision-makers' risk preferences, a risk-sensitive RL objective is usually needed. Among various risk-sensitive-RL modeling approaches, exponential utility of the cumulative reward is often used for its mathematical tractability and its interpretation as a robust control criterion under model uncertainty \citep{jaquette1976utility, howardrisk, Hansen2001}. 
For each state-action pair $(s, a)$, the risk-sensitive $Q$-function at time $t$ under a given policy $\pi$ is defined as
\begin{equation}\label{risk-sensitive Q}
    Q^{\pi}_{t}(s, a) := \frac{1}{\beta} \log \Big\{ \mathbb{E}_{\pi} \Big[ \exp \Big( \beta \sum_{k=t}^{T} \gamma^{k-t}r(s_t, a_t) \Big) \Big| s, a \Big] \Big\}, 
\end{equation}
where $\beta \neq 0$ denotes the risk sensitivity parameter. 
The Taylor expansion of the exponential utility function:
$$
\frac{1}{\beta} \log \mathbb{E}[\exp(\beta X)] = \mathbb{E}[X] + \frac{\beta}{2} \mathrm{Var}[X] + o(\beta)
$$
indicates that the utility function incorporates both the mean and the variance of returns. 
The sign of \(\beta\) determines whether the variance term contributes positively or negatively: negative $\beta$ implies risk aversion preference, positive $\beta$ indicates risk seeking attitude, and the $Q$-function \eqref{risk-sensitive Q} reduces to the standard risk-neutral form \eqref{risk-neutral Q} as $\beta \to 0$. Larger $|\beta|$ implies a stronger risk attitude and higher sensitivity to risk. 

Analogous to the risk-neutral case, the optimal policy \(\pi^*_{t}\) selects actions that maximize the risk-sensitive value function, i.e., $\pi^*_{t}(s) =\arg\max_{a \in \mathcal{A}} Q_t^*(s, a)$, for all $s \in \mathcal{S}$ and $t \in \{1, \dots, T\}$, where $Q_t^*(s, a) = \max_{\pi \in \Pi} Q_t^\pi(s, a)$ represents the optimal risk-sensitive $Q$-function. 
Throughout this paper, we focus on the risk-averse $Q$-function corresponding to the case where $\beta < 0$.

\noindent\textbf{Smoothed RL policy.}
The smoothed policy is defined by injecting Gaussian noise into the input state and computing the action that maximizes the expected $Q$-value over the perturbed inputs. Specifically, for each state $s_t$, a noise vector $\Delta_t \sim \mathcal{N}(0, \sigma^2 I_{d})$ is added, and the agent selects the action $a_t \in \mca$ that maximizes the smoothed $Q$-function evaluated at the perturbed state. The smoothed policy $\tilde{\pi}$ is defined as
\begin{equation}\label{eq:def smoothed policy}
\tilde{\pi}(s_t) := \pi(s_t + \Delta_t),  
\end{equation}
and the corresponding $Q$-value with the smoothed policy $\tilde{\pi}$ can be expressed as the expected $Q$-value under random input noise, and is given by
\begin{equation}
   Q^{\tilde{\pi}}(s, a)=  \mathbb{E}_{\Delta}[Q^{\pi}(s + \Delta, a)].
\end{equation}
This approach facilitates more stable policy behavior and supports certified robustness guarantees. 

\subsection{Lower bound certification}


\paragraph{Risk-sensitive RL with smoothed policy.}
We consider a finite-horizon MDP with adversarial state perturbations.
In this setting, at each time step $t$, the agent observes a perturbed state $s_t + \delta_t$, where $\delta_t \in \mathbb{R}^d$ is an additive perturbation introduced by an adversary. We assume that the perturbation sequence to time horizon $T$, $\mathbb{\delta} = (\delta_1, \ldots, \delta_{T})$ is constrained in $p$ norm, such that
$||\delta||_{p}=(\sum_{t=1}^{T} \|\delta_t\|_p^p )^{\frac{1}{p}}\leq \epsilon$, for some $ 1\leq p < \infty$ and perturbation budget $\epsilon > 0$. The set of admissible perturbations is given by
\[
\mathcal{B}^\epsilon = \left\{ \delta \in \mathbb{R}^{d \times T} : \left(\sum_{t=1}^{T} \|\delta_t\|_p^p \right)^{\frac{1}{p}} \leq \epsilon \right\}.
\]

In the context of RL with state adversarial perturbation, the action is sampled according to $a_t \sim \pi(s_{t}+\delta_{t})$, that is, the action is chosen based on perturbed state ${s}_t+\delta_{t}$. After executing $a_t$, the system transitions to the next state $s_{t+1} \sim P(\cdot|s_t, a_t)$, and the agent receives a reward $r(s_t, a_t)$.

The presence of perturbations can degrade the performance of the policy \citep{Michael2022, Kumar2021PolicySF}. The objective of robustness certification is to evaluate how well a given policy performs under worst-case perturbations. 
In this paper, we consider a risk-aversion objective and incorporate risk attitude through the exponential utility function, where the agent’s preference over returns is governed by a risk aversion level $\beta < 0$. Instead of directly evaluating the expected return, we formulate a risk-sensitive objective that accounts for worst-case perturbations by optimizing over all admissible perturbation sequences constrained by the $l_p$-norm:

\begin{equation}\label{eq:original objective}
\min_{\mathbb{\delta}} \frac{1}{\beta} \log \left(\mathbb{E}\left(\exp \beta\left(\sum_{t=1}^{T} \gamma^t r(s_t, a_t)\right)\right)\right), \quad \text{s.t. } ||\delta||_{p} \leq \epsilon.
\end{equation}
with $a_t \sim \pi(s_t + \delta_t)$.

Given a policy $\pi$, the goal of robust certification is to certify a lower bound on the cumulative reward under all feasible perturbation sequences $\delta \in \mathcal{B}^\epsilon$, 
thus providing robustness guarantees on policy performance in adversarial settings.

To facilitate analytical tractability in the lower bound analysis of \eqref{eq:original objective}, we adopt the smoothed policy $\tilde{\pi}$ defined in \eqref{eq:def smoothed policy}, which is constructed via Gaussian noise injection. 
Given the smoothed policy $\tilde{\pi}$ and initial state $s_0$, the lower bound certificate can be obtained by minimizing the exponential utility of the expected cumulative reward over the $l_{p}$ bounded adversarial perturbations $\delta=(\delta_1,\ldots,\delta_{T})$:
\begin{equation}\label{eq:risk-averse original objective}
\begin{aligned}
    &\min_{\delta}\frac{1}{\beta}\log \left(\mathbb{E}_{\tau \sim q(\tau)}\left(\exp \beta\left(\sum_{t=1}^{T}\gamma^{t}r(s_{t}, \tilde{\pi}(s_{t}+\delta_{t}))\right)\right)\right),    \\
    & \text{s.t.} \qquad ||\delta||_{p} \leq \epsilon,
    \end{aligned}
\end{equation}
where $q(\tau)$ denotes the joint distribution of the perturbed trajectory $\tau = (s_1,a_1,\ldots,s_{T}, a_{T})$ with $a_{t}\sim \tilde{\pi}(s_{t}+\delta_{t})$ and $s_{t+1}\sim P(\cdot|s_{t}, a_{t})$, for all $ t=1,\ldots,T-1$.

\paragraph{Risk-sensitive RL with stochastic observations.}
Solving the non-convex optimization problems \eqref{eq:risk-averse original objective} is generally challenging, as the objective functions involve both adversarial perturbations and stochastic smoothing noise. To address this, we adopt the approach proposed by \cite{mu2024reward} and \cite{Kumar2021PolicySF}, which considers a more general MDP setting where the agent interacts with the environment through stochastic observations. The essential idea of this approach is that the smoothed policy $\tilde{\pi}(s_t)$ can be equivalently interpreted as acting on stochastic observations sampled from a distribution \( \mu(s_t) \). 
Rather than injecting Gaussian noise \( \Delta_t \sim \mathcal{N}(0, \sigma^{2}I_{d})\) into the state and evaluating the policy on \( s_t + \Delta_t \), the same effect can be represented by sampling an observation \( o_t \in \mathcal{O}\subset \mathbb{R}^{d} \) from $\mu(\cdot|s_{t})$, where \( \mu(\cdot|s_t) \) represents the Gaussian distribution $\mathcal{N}(s_{t}, \sigma^{2}I_{d})$ and $\mathcal{O}$ denotes the continuous observation space.
The policy $\pi$ is then defined on the observation space, and the agent selects actions by sampling 
\( a_t \sim \pi(o_t) \) based on observations \( o_t \sim \mu(\cdot | s_t) \). 

To build on this observation-based formulation, we now rewrite the optimization problem in \eqref{eq:risk-averse original objective}. In this setting, the smoothed policy \( \tilde{\pi}(s_t + \delta_t) \) over the perturbed state \( s_t + \delta_t \) can be equivalently interpreted as sampling an action \( a_t \) according to \( \pi(o'_t) \), where the perturbed observation \( o'_t \) follows the Gaussian distribution \( \mu(\cdot|s_t + \delta_t) = \mathcal{N}(s_t + \delta_t, \sigma^2 I_d) \). This leads to the following optimization formulation:
\begin{equation}\label{eq:observation objective risk-averse}
    \min_{q(\tau) \in \mcd^{\epsilon}}\frac{1}{\beta}\log \left(\mathbb{E}_{\tau \sim q}\left(\exp \beta \left(\sum_{t=1}^{T}\gamma^{t}r(s_{t}, \pi(o_{t}))\right) \right)\right), 
\end{equation}
where the perturbation sequence \( \delta = (\delta_1, \ldots, \delta_{T}) \) satisfies \( \| \delta \|_p \leq \epsilon \), and each observation $o_{t}$ is sampled from the Gaussian distribution \( \mathcal{N}(s_t + \delta_t, \sigma^2 I_d) \), for all \( t = 1, \ldots, T \).
The admissible perturbation set with $p$-norm bounded constraint  ($1\leq p < \infty$) is defined as:
\begin{equation}\label{eq:original uncertainty set}
\mathcal{D}^\epsilon := \left\{ q(\tau) \in \left( \mathcal{P}(\mathcal{S}) \times \mathcal{P}(\mathcal{O}) \times \mathcal{P}(\mathcal{A}) \right)^T : \| \delta \|_p \leq \epsilon \right\}, 
\end{equation}
which represents the collection of joint distributions over trajectory \( \tau=(s_1,o_1,a_1,\ldots, s_{T}, o_{T}, a_{T}) \). Here, \( \mathcal{P}(\mathcal{S}) \), \( \mathcal{P}(\mathcal{O}) \), and \( \mathcal{P}(\mathcal{A}) \) denote the sets of probability distributions over the state, observation, and action spaces, respectively.

With a slight abuse of notation, we use \( \tau \) to denote the full trajectory consisting of states, observations, and actions under adversarial perturbations, where each observation is sampled from a Gaussian distribution \( \mu(o_t \mid s_t + \delta_t) = \mathcal{N}(s_t + \delta_t, \sigma^2 I_d) \), and each action is selected according to the policy \( a_t \sim \pi(o_t) \). Given an initial state \( s_0 \in \mathcal{S} \), and assuming $o_0 \sim \mu(o_0 \mid s_0 + \delta_0)$ and 
$a_0 \sim \pi(a_0 \mid o_0)$, the joint distribution over the trajectory 
$\tau = (s_1, o_1, a_1, \dots, s_T, o_T, a_T)$ is
\begin{equation}\label{eq: expression of q}
q(\tau) = \prod_{t=1}^{T} P(s_{t} \mid s_{t-1}, a_{t-1}) \cdot \mu(o_t \mid s_t + \delta_t) \cdot \pi(a_t \mid o_t).
\end{equation}

Note that function $\frac{1}{\beta}\log(\cdot)$ is decreasing if $\beta<0$. The minimization problem in \eqref{eq:observation objective risk-averse} can be expressed as maximization of the expected cumulative reward with the risk-averse utility measure. To facilitate solving problem \eqref{eq:observation objective risk-averse}, we reformulate it into the following standard form of convex optimization:
\begin{equation}\label{final risk-averse objective}
    \min_{q(\tau) \in \mcd^{\epsilon}}-\mathbb{E}_{\tau \sim q}\left(\exp \beta \left(\sum_{t=1}^{T}\gamma^{t}r(s_{t}, \pi(o_{t}))\right) \right), \quad \beta<0, 
\end{equation}
and the optimal value of \eqref{eq:observation objective risk-averse} can be calculated by $\frac{1}{\beta}\log(-v^{\star})$, where $v^{\star}$ denotes the optimal solution of problem \eqref{final risk-averse objective}.

In the following section, we first derive a tractable reformulation of the optimization problem \eqref{final risk-averse objective}, and then present efficient algorithms for solving the lower-bound certification problem.

\section{Reformulation of the lower bound certification}\label{sec: main result}

In this section, we first reformulate the lower bound certification problem \eqref{final risk-averse objective} into an equivalent optimization problem and derive its tractable form. We then develop algorithms to estimate the certified lower bound under state adversarial perturbations during RL deployment.

\subsection{Optimization problems of the lower bound certification}

\noindent \textbf{Convex relaxation.} Solving the optimization problem \eqref{final risk-averse objective} directly over the admissible perturbation set $\mathcal{D}^\epsilon$ \eqref{eq:original uncertainty set} is challenging due to its potential non-convex property and infinite-dimensional nature. 
In this section, we consider a convex relaxation of \(\mathcal{D}^\epsilon\) by using a \(\phi\)-divergence-based set \(\tilde{\mathcal{D}}^{\epsilon'}\), which facilitates tractable optimization and 
satisfies $\mcd^{\epsilon} \subseteq \tilde{\mcd}^{\epsilon'}$.

A $\phi$-divergence between two probability distributions $q$ and $p$ is defined as:
\begin{equation*}
    D_\phi(q \| p) = \int \phi\left( \frac{dq}{dp} \right) dp,
\end{equation*}
where $\phi: \mathbb{R}_+ \to \mathbb{R}$ is a convex function with $\phi(1) = 0$. This formulation includes many widely-used divergences, including the KL divergence, total variation (TV) distance, and the $\chi^2$-divergence \citep{bental}.

We construct the relaxed convex uncertainty set $\tilde{\mcd}^{\epsilon'}$ as a 
$\phi$-divergence ball centered at a reference distribution
\begin{equation}\label{eq: expression of p}
p(\tau) = \prod_{t=1}^{T} P(s_{t} \mid s_{t-1}, a_{t-1}) \cdot \mu(o_t \mid s_t)\cdot \pi(a_t \mid o_t),
\end{equation}
which represents the joint distribution over the trajectory $\tau$ induced by the agent's policy, transition dynamics and observation distributions. Specifically, the relaxed set is given by
\[
\tilde{\mathcal{D}}^{\epsilon'} := \left\{ q \in (\mathcal{P}(\mathcal{S}) \times \mathcal{P}(\mathcal{O}) \times \mathcal{P}(\mathcal{A}))^T : D_\phi(q \| p) \leq \epsilon' \right\},
\]
where $\epsilon'$ is the threshold determined by the perturbation budget $\epsilon$ of the constraint set $\mathcal{D}^\epsilon$ and the $\phi$-divergence function. This formulation provides a tractable relaxation of $\mathcal{D}^\epsilon$ that retains key adversarial deviations from the nominal distribution.

Therefore, based on the relaxation set $\tilde{\mcd}^{\epsilon'}$, the risk-averse certification \eqref{final risk-averse objective} can be expressed as
\begin{equation}\label{eq:reformulation objective risk-averse}
    \min_{q\in \tilde{\mcd}^{\epsilon'}}-\mathbb{E}_{\tau \sim q}\left(\exp \beta \left(\sum_{t=1}^{T}\gamma^{t}r(s_{t}, \pi(o_{t}))\right) \right), \quad \beta<0,
\end{equation}
which admits a convex formulation and can be efficiently solved using duality techniques. 
By leveraging the Lagrangian duality method \citep{mu2024reward}, the convex problem in \eqref{eq:reformulation objective risk-averse} admits an equivalent dual representation, as formalized in the following theorem.

\begin{theorem}\label{theorem 1}
Let $\tau =(s_1, o_1, a_1, \ldots, s_{T}, o_{T}, a_{T})$ denote the trajectory up to time horizon $T$. Under the $\phi$-divergence-based constraint $D_{\phi}(q||p)\leq \epsilon'$, the optimization problem in \eqref{eq:reformulation objective risk-averse} is equivalent to solving the following convex optimization problem:
\begin{equation}\label{eq: equivalent optimization form}
\max_{\xi>0,\eta\in R}\Big\{\xi\Big(\eta -\mathbb{E}_{\tau \sim p}\Big(\phi^{\star}\Big((\eta + \epsilon'+ \frac{\exp \beta \left(\sum_{t=1}^{T}\gamma^{t}r(s_{t}, \pi(o_{t}))\right)}{\xi}\Big)\Big)\Big\},
\end{equation}
where $\phi^{*}(x)=\max_{y>0}(xy - \phi(y))$ denotes the conjugate of $\phi$, 
and $\phi$ is a convex function satisfying $\phi(1)=0$.
\end{theorem}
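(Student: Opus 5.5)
The plan is to rewrite \eqref{eq:reformulation objective risk-averse} as a convex program over likelihood ratios, dualize it with Lagrange multipliers for the divergence constraint and the normalization constraint, and then justify strong duality.

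\emph{Change of variables.} I will restrict to $q\ll p$, which is automatic whenever $D_\phi(q\|p)<\infty$ and $\phi$ grows superlinearly; otherwise the singular part can be handled through the recession constant of $\phi$. Write $L=dq/dp\ge 0$ and
\[
f(\tau)=\exp\Big(\beta\sum_{t=1}^{T}\gamma^{t}r(s_t,\pi(o_t))\Big).
\]
Since $r$ is bounded and $\beta<0$, $f$ is bounded and strictly positive. Problem \eqref{eq:reformulation objective risk-averse} then becomes
\[
\min_{L\ge 0}\ -\mathbb{E}_p[fL]\quad \text{s.t.}\quad \mathbb{E}_p[\phi(L)]\le\epsilon',\qquad \mathbb{E}_p[L]=1.
\]
The objective is linear in $L$, the divergence constraint is convex because $\phi$ is convex, and the normalization constraint is affine. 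So this is a convex program over a decomposable space of measurable functions, for instance $L^1(p)$.

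\emph{Lagrangian and pointwise minimization.} Introduce $\xi\ge 0$ for the divergence constraint and $\eta\in\mathbb{R}$ for the normalization. The Lagrangian is
\[
\mathcal{L}(L;\xi,\eta)=\mathbb{E}_p\big[-fL+\xi\phi(L)-\eta L\big]-\xi\epsilon'+\eta .
\]
For $\xi>0$, I will use Rockafellar's interchangeability principle for integral functionals on decomposable spaces to move the infimum over $L$ inside the expectation. The pointwise problem is
\[
\inf_{y\ge0}\big\{-(f+\eta)y+\xi\phi(y)\big\}=-\xi\,\phi^{*}\!\big((f+\eta)/\xi\big).
\]
Here $\phi^{*}$ is taken with the supremum over $y\ge0$ (equivalently $y>0$ when $\phi$ is lower semicontinuous), which matches the definition in the statement. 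The dual function is therefore
\[
g(\xi,\eta)=\eta-\xi\epsilon'-\xi\,\mathbb{E}_p\big[\phi^{*}((f+\eta)/\xi)\big].
\]
Reparametrizing $\eta=\xi\tilde\eta$ gives
\[
g=\xi\Big(\tilde\eta-\epsilon'-\mathbb{E}_p\big[\phi^{*}(\tilde\eta+f/\xi)\big]\Big),
\]
which has the form of \eqref{eq: equivalent optimization form}. I will carefully track signs and the placement of $\epsilon'$, and expect to verify that the displayed expression in the statement is this dual after relabeling the multiplier, with $\epsilon'$ entering additively through the term $-\xi\epsilon'$. The case $\xi=0$ yields $-\infty$ or a trivial bound unless $f$ is essentially constant, so one may restrict to $\xi>0$. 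The dual is concave in $(\xi,\eta)$, because $\xi\phi^{*}(\cdot/\xi)$ is a perspective function, so maximizing it is a convex optimization problem.

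\emph{Strong duality.} I expect this to be the main obstacle. Weak duality is immediate. For zero gap, I will use Slater's condition: $q=p$, i.e. $L\equiv1$, satisfies $\mathbb{E}_p[\phi(1)]=0<\epsilon'$ strictly and meets the equality constraint. Together with the boundedness of $f$, which keeps the primal value finite (it lies in $[-\sup f,-\inf f]$), standard infinite-dimensional Lagrangian duality gives equal optimal values and dual attainment; the reference is Luenberger-type results, or the $\phi$-divergence DRO duality of Ben-Tal et al.\ and the analogous theorem in \citet{mu2024reward}. The delicate points are the measurability and interchange step, which needs $\phi$ to be a proper, lower semicontinuous, convex normal integrand, and the handling of the nonnegativity constraint on $L$. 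I would settle both by invoking the interchangeability theorem directly, citing the risk-neutral result of \citet{mu2024reward} with $f$ in place of the reward.
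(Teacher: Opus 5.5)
Your proposal is correct and follows essentially the paper's route. The paper dualizes the divergence constraint with $\xi$ and applies the Ben-Tal et al.\ identity $\min_q\{-\mathbb{E}_q[R]+D_\phi(q\|p)\}=\max_\eta\{\eta-\mathbb{E}_p[\phi^*(\eta+R)]\}$ to the inner problem; this identity is exactly your normalization multiplier plus pointwise conjugation. It finishes with the shift $t=\eta+\epsilon'$, which is the relabeling you need to move $\epsilon'$ from $-\xi\epsilon'$ into the argument of $\phi^*$, and your Slater argument with $L\equiv 1$ supplies the strong duality the paper takes for granted.

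One caution: the perspective argument gives joint concavity only in your unscaled $(\xi,\eta)$, and this is lost after the substitution $\eta=\xi\tilde\eta$. For example, for KL and $f\equiv c>0$ the displayed objective $\xi(\eta+1-e^{\eta+\epsilon'+c/\xi})$ has an indefinite Hessian wherever $\eta<-\epsilon'-c/\xi$. So state the convexity claim for the unscaled dual, not for the displayed problem.
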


\begin{proof}
For each trajectory $\tau =(s_1, o_1, a_1, \ldots, s_{T}, o_{T}, a_{T})$, we define $R(\tau):=\exp \beta (\sum_{t=1}^{T}\gamma^{t}r(s_{t}, \pi(o_{t})))$ as the exponential utility of the cumulative reward with risk-aversion level $\beta<0$. Therefore, the optimization problem \eqref{eq:reformulation objective risk-averse} can be expressed as 
\begin{equation}\label{eq:rewrite the objective}
    \min_{q} -\mathbb{E}_{\tau \sim q}[R(\tau)], \quad \text{s.t.}\, D_{\phi}(q||p) \leq \epsilon'.
\end{equation}
The Lagrangian dual of the optimization problem \eqref{eq:rewrite the objective} is given by:
\begin{equation}
\begin{aligned}
    & \max_{\xi>0}\min_{q}\left\{ -E_{\tau \sim q}[R(\tau)] + \xi(D_{\phi}(q||p)-\epsilon')\right\} \nonumber \\
    & = \max_{\xi>0} \left\{\xi \left[ \min_{q} \left[-E_{\tau\sim q}[\frac{R(\tau)}{\xi})] + D_{\phi}(q||p)-\epsilon' \right] \right]\right\} \nonumber \\
    &\refstepcounter{equation} 
      \overset{(\theequation)}{=}
      \label{theo1: 1}  \max_{\xi>0}\left\{ \xi \left[ \max_{t\in \mathbb{R}} \left\{t-E_{\tau \sim 
    p}[\phi^{*}(t+\frac{R(\tau)}{\xi})]-\epsilon' \right\} \right]\right\} \\
    & = \max_{\xi>0, \eta \in \mathbb{R}}\left\{ \xi \left[\eta +\epsilon' -E_{\tau\sim p}[\phi^{*}(\eta + \epsilon' + \frac{R(\tau)}{\xi})]-\epsilon' \right] \right\} \nonumber \\
    & = \max_{\xi>0, \eta\in \mathbb{R}}\left\{ \xi \left[ \eta  -E_{\tau\sim p}[\phi^{*}(\eta + \epsilon' + \frac{R(\tau)}{\xi})]  \right] \right\} \nonumber,
\end{aligned}
\end{equation}
where (\ref{theo1: 1}) is due to the duality formula of the optimization with $\phi$ divergence \citep{bental2007}:
$\min_{q}\{ -\mathbb{E}_{\tau \sim q}[R(\tau)]+D_{\phi}(q||p) \} = \max_{\eta \in \mathbb{R}}\{\eta - \mathbb{E}_{\tau \sim p}[\phi^{*}(\eta + R(\tau))] \}$.

Therefore, minimizing the optimization problem \eqref{eq:reformulation objective risk-averse} with risk-aversion level $\beta<0$ can be solved by maximizing its dual reformulation \eqref{eq: equivalent optimization form}. The dual problem is a finite-dimensional convex program involving two variables, $\xi > 0$ and $\eta \in \mathbb{R}$, where the objective is defined via the convex conjugate $\phi^*$ of the divergence function $\phi$. 
\end{proof}

Theorem \ref{theorem 1} provides a tractable reformulation of the optimization problem \eqref{eq:reformulation objective risk-averse} that holds for any choice of $\phi$-divergence. Given a reference distribution $p$,
the original infinite-dimensional problem \eqref{eq:reformulation objective risk-averse} can be equivalently reformulated as the finite-dimensional convex program \eqref{eq: equivalent optimization form}. 

In practice, selecting an appropriate $\phi$-divergence is crucial, as it ensures the validity and tractability of the convex reformulation in Theorem \ref{theorem 1}. In our setting, the uncertainty set $\mathcal{D}^\epsilon$ includes trajectory distributions induced by $l_p$-norm bounded state perturbations.
To ensure a valid relaxation, the $\phi$-divergence should induce a divergence ball $\tilde{\mathcal{D}}^{\epsilon'}$ that contains $\mathcal{D}^\epsilon$, where $\epsilon'$ is a divergence threshold determined by the original perturbation budget $\epsilon$ and the choice of $\phi$. 
The norm $p$ determines the geometric structure of $\mathcal{D}^\epsilon$, thereby guiding the selection of an appropriate divergence $\phi$ and threshold $\epsilon'$ to guarantee the inclusion $\mathcal{D}^\epsilon \subseteq \tilde{\mathcal{D}}^{\epsilon'}$.

Due to the sequential dependencies in trajectory distributions, it is challenging to directly derive the relationship between $\phi$-divergence of joint distributions and the $l_p$-norm bounded perturbations sequences $\delta = (\delta_1, \ldots, \delta_{T})$. To address this, we adopt the assumption that adversarial perturbation occurs only at the initial state \citep{mu2024reward, Kumar2021PolicySF}. 
This assumption is justified as the initial perturbation affects the future observations and actions along the trajectory, thereby capturing long-term adversarial effects while allowing for tractable divergence control. 
Under this assumption, given the initial perturbation $\delta_1 \in \mathcal{B}^{\epsilon}$ and the initial state $s_1$, the $\phi$-divergence between the distributions $q(\tau)$ and $p(\tau)$ reduces to the  $\phi$-divergence the observation distributions $\mu(\cdot | s_1)$ and $\mu(\cdot | s_1 + \delta_1)$. The detailed derivation is provided in Appendix \ref{appendix: observation reduction}.

Based on the Gaussian smoothing formulation described in \eqref{eq:def smoothed policy}, the distributions of the initial observation can be modeled as $\mu(\cdot| s_1) = \mathcal{N}(s_1, \sigma^2 I_d)$ and $\mu(\cdot| s_1+\delta_1) = \mathcal{N}(s_1+\delta_1, \sigma^2 I_d)$, respectively. Under this formulation, we identify suitable 
$\phi$-divergences for different values of $p$ ($1\leq p < \infty$), such that $D_{\phi}(\mu(o_1|s_1+\delta_1)||\mu(o_1|s_1)) \leq \epsilon', \forall ||\delta_{1}||_{p} \leq \epsilon$.
That is, given a perturbation budget $\epsilon$, one can select appropriate $\phi$-divergence functions and threshold $\epsilon'$ to ensure that $\mcd^{\epsilon}\subseteq \tilde{\mcd}^{\epsilon'}$.

The following proposition characterizes the divergence between the observation distributions $\mu(\cdot|s_1+\delta_1)$ and $\mu(\cdot|s_1)$, under adversarial perturbations bounded in $l_1$-norm. 
Under the Gaussian smoothing formulation, it establishes a relationship between the perturbation budget $\epsilon$ and the corresponding TV threshold $\epsilon'$, enabling a tractable convex relaxation of the original constraint set $\mathcal{D}^\epsilon$.

\begin{prop}\label{prop1}
Given the initial state $s_1\in \mcs$ and two Gaussian distributions $\mu(\cdot|s_1+\delta_1)=\mathcal{N}(s_1+\delta_1,\sigma^{2}I_{d})$ and $\mu(\cdot|s_1) = \mathcal{N}(s_1,\sigma^{2}I_{d})$, for any adversarial perturbation $||\delta_1||_{1}\leq \epsilon$, we have 
\begin{equation}\label{eq: TV relationship}
    D_{TV}(\mu(o_1|s_1+\delta_1)||\mu(o_1|s_1)) \leq 2 \Phi\left(\frac{\epsilon}{2 \sigma}\right) -1, 
\end{equation}
where $\Phi$ is the cumulative distribution function (CDF) of the standard normal distribution $\mathcal{N}(0,1)$.
\end{prop}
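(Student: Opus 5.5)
The plan is to compute the total variation distance between the two Gaussians exactly. It depends on $\delta_1$ only through $\|\delta_1\|_2$, and we then compare $\|\delta_1\|_2$ with $\|\delta_1\|_1$. We take $D_{TV}(q\|p)=\sup_{A}|q(A)-p(A)|=\tfrac12\int|q-p|$, which is the $\phi$-divergence with $\phi(u)=\tfrac12|u-1|$. If $\delta_1=0$ both sides of \eqref{eq: TV relationship} are trivially handled (the left side is $0$), so assume $\delta_1\neq 0$.

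\textbf{Step 1 (reduction to one dimension).} Total variation is invariant under applying the same bijective measurable map to both distributions. Apply the affine isometry $o\mapsto (o-s_1)/\sigma$ followed by an orthogonal rotation $U$ sending $\delta_1/\|\delta_1\|_2$ to the first basis vector $e_1$. This turns $\mathcal{N}(s_1,\sigma^2 I_d)$ into $\mathcal{N}(0,I_d)$ and $\mathcal{N}(s_1+\delta_1,\sigma^2I_d)$ into $\mathcal{N}(a e_1, I_d)$ with $a=\|\delta_1\|_2/\sigma$. Both are product measures sharing the identical $\mathcal{N}(0,I_{d-1})$ factor in coordinates $2,\dots,d$. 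Hence the TV distance equals that between $\mathcal{N}(0,1)$ and $\mathcal{N}(a,1)$ on $\mathbb{R}$, since the density ratio depends only on the first coordinate and one can integrate out the others.

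\textbf{Step 2 (one-dimensional computation).} For $\mathcal{N}(0,1)$ and $\mathcal{N}(a,1)$ with $a>0$, the densities cross exactly at $x=a/2$. The supremum over $A$ is attained at $A=\{x>a/2\}$, where $\mathcal{N}(a,1)$ dominates. This gives
\[
D_{TV}=\big(1-\Phi(-a/2)\big)-\big(1-\Phi(a/2)\big)=\Phi(a/2)-\Phi(-a/2)=2\Phi(a/2)-1 .
\]
Thus $D_{TV}(\mu(o_1|s_1+\delta_1)\|\mu(o_1|s_1))=2\Phi\big(\|\delta_1\|_2/(2\sigma)\big)-1$ exactly.

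\textbf{Step 3 (norm comparison).} Since $\|\delta_1\|_2\le\|\delta_1\|_1\le\epsilon$ (the $\ell_2$ norm is dominated by the $\ell_1$ norm), and $x\mapsto 2\Phi(x/(2\sigma))-1$ is increasing, we conclude \eqref{eq: TV relationship}.

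The only step needing care is Step 1, namely justifying that TV is preserved under the change of variables and that the common independent factors drop out. I would handle this directly via the density formula: the ratio $d\mu(\cdot|s_1+\delta_1)/d\mu(\cdot|s_1)$ is $\exp\big(\langle o-s_1,\delta_1\rangle/\sigma^2-\|\delta_1\|_2^2/(2\sigma^2)\big)$. This is a function of the scalar $\langle o-s_1,\delta_1\rangle$, whose law under $\mu(\cdot|s_1)$ is $\mathcal{N}(0,\sigma^2\|\delta_1\|_2^2)$. This makes the reduction immediate.
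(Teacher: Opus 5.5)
Your proof is correct and takes essentially the same route as the paper. Both standardize the Gaussians, use the fact that the likelihood ratio depends only on the projection onto $\delta_1$, obtain exactly $2\Phi(\|\delta_1\|_2/(2\sigma))-1$, and finish with $\|\delta_1\|_2\le\|\delta_1\|_1\le\epsilon$ and monotonicity of $\Phi$. The differences are cosmetic: you rotate $\delta_1$ onto $e_1$ where the paper directly computes the masses of the half-space $\{x^\top\tilde\delta>\|\tilde\delta\|^2/2\}$, and the map $o\mapsto(o-s_1)/\sigma$ is an affine bijection rather than an isometry, which is all the argument needs.
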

\begin{proof}
See details in Appendix \ref{appendix: proof of l1}.
\end{proof}

Given the original constraint set $\mathcal{D}^{\epsilon}$ with an $l_1$-norm bounded budget, 
Proposition \ref{prop1} shows that when the $\phi$-divergence is defined as the TV distance, 
i.e., $\phi(x) = \tfrac{1}{2} |x - 1|$, the relaxed constraint set 
$\tilde{\mathcal{D}}^{\epsilon'}$ with perturbation budget 
$\epsilon' = 2\Phi\!\left(\tfrac{\epsilon}{2\sigma}\right) - 1$ 
satisfies $\mathcal{D}^{\epsilon} \subseteq \widetilde{\mathcal{D}}^{\epsilon'}$.

It has been shown that for any $||\delta_1||_{2} \leq \epsilon$, the provably tight relaxation $\tilde{\mcd}^{\epsilon'}$ can be selected as 
\begin{equation}
    \tilde{\mcd}^{\epsilon'}=\{ \mu(\cdot|s_1+\delta_1) : D_{HS,\theta}(\mu(\cdot|s_1+\delta_1||\mu(\cdot|s_1))) \leq \epsilon'\}, \forall \theta \geq 0,
\end{equation}
where $D_{HS,\theta}(\mu(\cdot|s_1+\delta_1||\mu(\cdot|s_1)))$ denotes the Hockey-Stick divergence between Gaussian distributions $\mu(\cdot|s_1+\delta_1)$ and $\mu(\cdot|s_1)$ with $\phi(x)=\max(x-\theta,0)-\max(1-\theta,0)$ \citep{dvijotham2020framework, mu2024reward} and the threshold $\epsilon'$ is given by \citep{balle2018improving}
\begin{equation}\label{relationship hs divergence}
   \Phi\left(\frac{\epsilon}{2\sigma} - \frac{\log(\theta)}{2\epsilon} \right) - \lambda \Phi\left( -\frac{\epsilon}{2\sigma} - \frac{\log(\theta)}{2\epsilon} \right) - \max(1 - \theta, 0).
\end{equation}

In the subsequent subsection, we develop algorithms to solve the maximization problem \eqref{eq: equivalent optimization form}, aiming to certify a lower bound for RL deployment under state adversarial perturbations.

\subsection{Lower bound certificate algorithms}

In this subsection, we develop algorithms to approximate the lower bound certificate defined in \eqref{eq:reformulation objective risk-averse} under a convex relaxation set $\tilde{\mcd}^{\epsilon'}$. Theorem \ref{theorem 1} provides a tractable dual formulation \eqref{eq: equivalent optimization form} for problem \eqref{eq:reformulation objective risk-averse} by introducing the Lagrange multipliers $\xi$ and $\eta$. The algorithm for solving the dual problem \eqref{eq: equivalent optimization form} is detailed in Algorithm \ref{al: lower bound certificate}.

To solve the convex optimization problem in \eqref{eq: equivalent optimization form}, it is necessary to evaluate the expectation 
\begin{equation}\label{2: expectation}
\mathbb{E}_{\tau \sim p}(\phi^{\star}((\eta + \epsilon'+ \frac{R(\tau)}{\xi})).
\end{equation}
It is challenging to calculate the exact value of the expected return due to the sequential nature of RL. We adopt a Monte Carlo sampling method to approximate the expectation of the exponential utility return.
Specifically, 
the Monte Carlo estimate of the above expectation is given by
$
\hat{\mu} = \frac{1}{M} \sum_{i=1}^{M} 
\phi^\ast(\eta + \epsilon' + \frac{R(\tau_i)}{\xi})$, 
where \(\tau_1, \dots, \tau_M\) are independent trajectory samples drawn from \(p(\tau)\).

To quantify the estimation uncertainty, we construct a confidence interval for the true expectation at a confidence level of \(\alpha = 0.01\). 
According to Hoeffding’s inequality \citep{vershynin2018high}, given \(M\) independent samples, the expectation \eqref{2: expectation} is bounded above by
\begin{equation}\label{upper bound}
\hat{\mu}_{\text{upper}} =
\hat{\mu} + K\sqrt{\frac{\ln(2/\alpha)}{2M}}
\end{equation}
with probability at least \(1 - \alpha\), where $K$ denotes the range of $\phi^\ast(\eta + \epsilon' + \frac{R(\tau)}{\xi})$. In this paper, we adopt the upper confidence bound $\mu_{\text{upper}}$ as a conservative estimation of expectation \eqref{2: expectation}, ensuring that the resulting certificate remains a valid lower bound on policy performance with high probability.

Algorithm \ref{al: lower bound certificate} performs this estimation procedure and consists of two main components: solving the convex optimization problem in \eqref{eq: equivalent optimization form} and evaluating the exponential utility
of the cumulative reward under a smoothed policy $\tilde{\pi}$. The overall framework of Algorithm \ref{al: lower bound certificate} is adapted from \cite{mu2024reward} with modifications to accommodate our risk-sensitive certification setting.
These procedures are detailed in Functions \ref{al: optimization} and \ref{al: getreward}, respectively.

\begin{algorithm} 
\caption{Lower bound certificate algorithm} 
\label{al: lower bound certificate} 
\begin{algorithmic}[1]
\Require Trained smoothed policy $\tilde{\pi}$, smoothed $Q$-network $Q^{\tilde{\pi}}$, $\phi$-divergence-based convex relaxation set $\tilde{\mcd}^{\epsilon'}$, risk aversion level $\beta$, sample sizes $N$ and $M$ ($N < M$)
\State $\xi^{*}, \eta^{*} \leftarrow$ Optimization$(N,Q^{\tilde{\pi}},\epsilon',\phi)$

// Calculate the dual optimization problem \eqref{eq:optimization in algorithm 3} with a relatively small sample size $N$

\State $R=\{R_1,\cdots,R_M\} \leftarrow$ GetReward$(M,Q^{\tilde{\pi}})$

// Calculate the exponential utility of the cumulative reward with smoothed policy with a relatively large sample size $M$

\State Compute $\phi^{*}(\eta^{*}+\epsilon'+\frac{R_{i}}{\xi^{*}})$ for each sample $i=1,\ldots, M$, and obtain the Monte Carlo estimate $\hat{\mu}=\frac{1}{M}\sum_{i=1}^{M}\phi^{*}_{i}(\eta^{*}+\epsilon'+\frac{R_{i}}{\xi^{*}})$ along with its upper bound $\hat{\mu}_{\text{upper}}$ \eqref{upper bound}
\end{algorithmic}
\Return Certified lower bound $\underline{R}= \xi^{*}(\eta^{*} - \hat{\mu}_{\text{upper}}) $
\end{algorithm}

\begin{function}
\caption{Optimization$(N, Q^{\tilde{\pi}}, \epsilon', \phi)$} \label{al: optimization}
\begin{algorithmic}[1]
\Require Sample size $N$, trained smoothed policy $\tilde{\pi}$, smoothed $Q$-network $Q^{\tilde{\pi}}$
\State $R=\{R_1,\ldots,R_{N}\} \leftarrow \text{GetReward}(N, Q^{\tilde{\pi}})$
\State Solving the convex optimization problem 
\begin{equation}\label{eq:optimization in algorithm 3}
    \max_{\xi>0,\eta\in R}\big\{\xi\big[\eta -\sum_{i=1}^{N}\big(\phi^{\star}(\eta + {\epsilon'}+ \frac{R_{i}}{\nu}\big)\big]\big  \}
\end{equation}
\end{algorithmic}
\Return Optimal solutions $\xi^{*}, \eta^{*}$
\end{function}

In Algorithm \ref{al: lower bound certificate}, Step 1 calls \textbf{Optimization$(N, Q^{\tilde{\pi}},\epsilon',\phi)$} (Function \ref{al: optimization}) to solve the dual optimization problem defined in \eqref{eq:optimization in algorithm 3}. 
In this function, the expectation in \eqref{2: expectation} is approximated using a Monte Carlo average over $N$ return samples, which are obtained by executing the smoothed policy $\tilde{\pi}$ on perturbed trajectories (see details in Function \ref{al: getreward}). Based on this approximation, the original risk‑sensitive certification problem \eqref{eq: equivalent optimization form} is reformulated as the convex optimization problem \eqref{eq:optimization in algorithm 3}, which can be efficiently solved to obtain the optimal Lagrange multipliers $\xi^{*}$ and $\eta^{*}$. 
The optimization problem \eqref{eq:optimization in algorithm 3} is implemented using the CVXPY modeling framework \citep{diamond2016cvxpy}, which enables efficient and reliable solution of the convex objective using standard convex solvers under disciplined convex programming rules.

In step 2, Algorithm \ref{al: lower bound certificate} calls \textbf{GetReward$(M, Q^{\tilde{\pi}})$} (Function \ref{al: getreward}) to generate return samples $R=\{R_1,\dots,R_M\}$ by executing the smoothed policy $\tilde{\pi}$ over $M$ independent trajectories, each perturbed by Gaussian noise $\Delta \sim \mathcal{N}(0, \sigma^2 I_d)$. At each time step $t$, the agent selects actions according to the well-trained smoothed $Q$-network $Q^{\tilde{\pi}}$, and the exponential utility of the cumulative reward is recorded to form the Monte Carlo sample set.
Note that a relatively small sample size $N$ is used in Step 1 to improve computational efficiency when solving the optimization problem \eqref{eq:optimization in algorithm 3}, as each sample increases the complexity of the convex objective. Step 2 adopts a larger sample size $M$ to generate return samples for a more accurate Monte Carlo estimate of the certification objective in Step 3.

\begin{function} 
\caption{GetReward$(M, Q^{\tilde{\pi}})$} 
\label{al: getreward} 
\begin{algorithmic}[1]
\Require Sample size $M$, trained smoothed policy $\tilde{\pi}$, smoothed $Q$-network $Q^{\tilde{\pi}}$, risk aversion level $\beta$
\For
{$m \leftarrow 1, \ldots M$}
\State $R_{m} \leftarrow 0$
\For
{$t$ from 1 to $T$}
\State Sample random noise $\Delta \sim \mathcal{N}(0, \sigma^{2}I_{d})$
\State Inject random noise to the state $s'_{t}\leftarrow s_{t}+\Delta$
\State Select greedy action $a_{t} \leftarrow \arg \max_{a\in A}Q^{\tilde{\pi}}(s'_{t}, a)$
\State Obtain reward $r_{t}$ and next state $s_{t+1}$
\State $R_{m} \leftarrow \exp \beta(R_{m} + r_{t})$
\EndFor
\EndFor
\end{algorithmic}
\Return $R= \{ R_{1}, \ldots, R_{M}\}$
\end{function}

Finally, in Step 3, Algorithm \ref{al: lower bound certificate} computes the Monte Carlo estimate of the dual objective using $M$ sampled trajectories and derives its upper confidence bound $\hat{\mu}_{\mathrm{upper}}$ under a confidence level of $\alpha$. The certified lower bound is then computed as
\[
R = \xi^*(\eta^* - \hat{\mu}_{\mathrm{upper}}),
\]
and returned as the final output of the algorithm. This certification procedure provides a computationally tractable estimate of the certified performance guarantee under state adversarial perturbations during RL deployment.

\section{Numerical Study}\label{sec: experiment}
In this section, we conduct experiments to demonstrate the proposed methods for computing the risk-averse lower bound certification \eqref{eq:observation objective risk-averse}.
We also investigate how different levels of risk aversion during training affect the certified performance of RL policies under adversarial state perturbations.
We treat the risk aversion level $\beta$ as a tunable hyperparameter, allowing the training value ($\beta_{\text{train}}$) to differ from the evaluation-time value ($\beta_{\text{test}}$) used for lower bound certification.

\begin{figure*}[htbp]
    \centering
    
    \subfloat[$\beta_{\text{test}} \to 0$ (risk-neutral), $l_2$-norm]{%
        \includegraphics[width=0.45\textwidth]{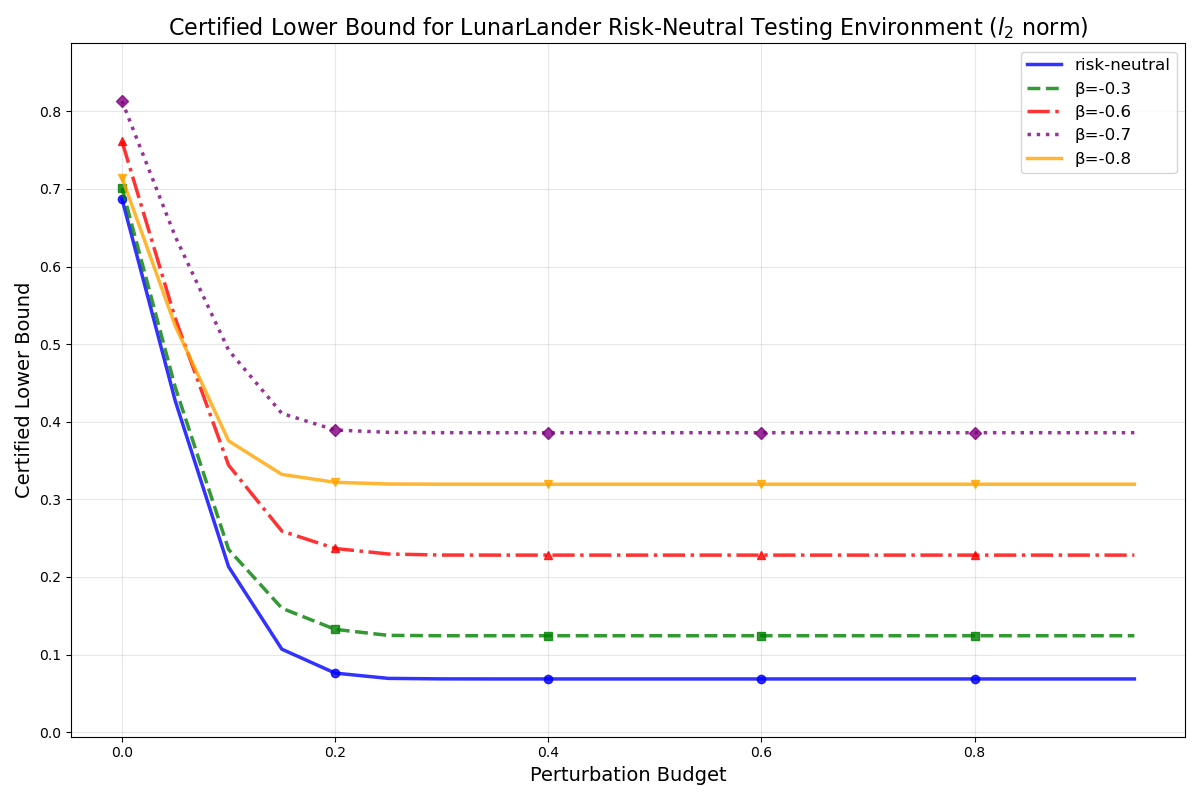}
        \label{fig: ln_risk_neutral_l2}%
    }
    \hfill
    \subfloat[$\beta_{\text{test}}=-0.2$ (risk-averse), $l_2$-norm ]{%
        \includegraphics[width=0.45\textwidth]{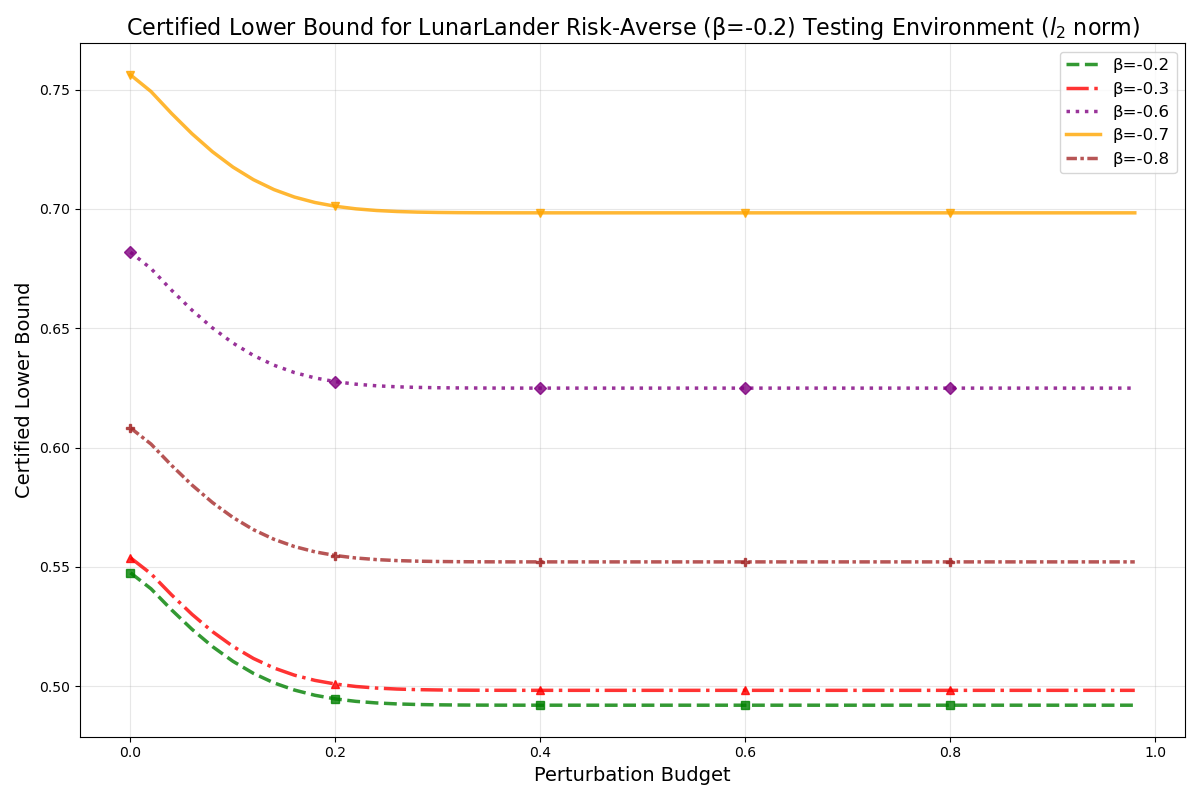}%
        \label{fig: ln_risk_averse_0.2_l2}%
    }


    \subfloat[$\beta_{\text{test}}=-0.4$ (risk-averse), $l_2$-norm]{%
        \includegraphics[width=0.45\textwidth]{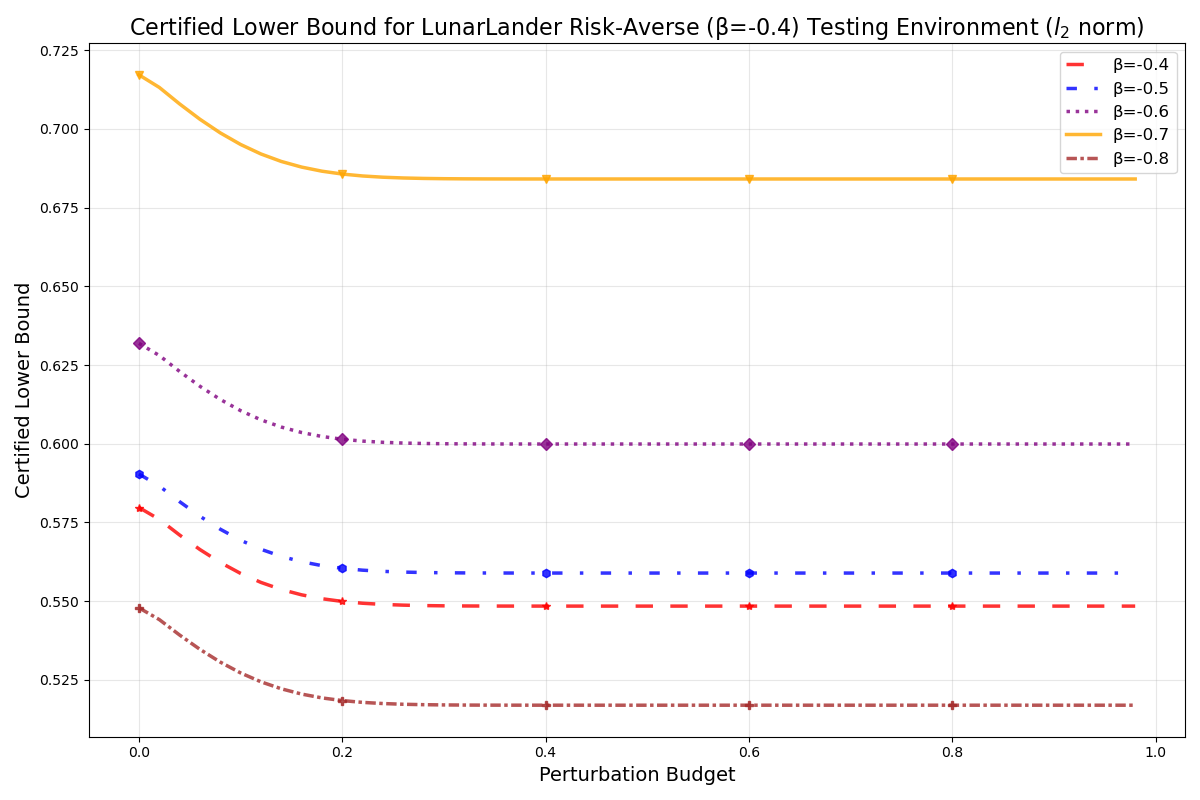}%
        \label{fig:ln_risk_averse_0.4_l2}%
    }
    \hfill
    \subfloat[$\beta_{\text{test}}=-0.6$ (risk-averse), $l_2$-norm] {%
        \includegraphics[width=0.45\textwidth]{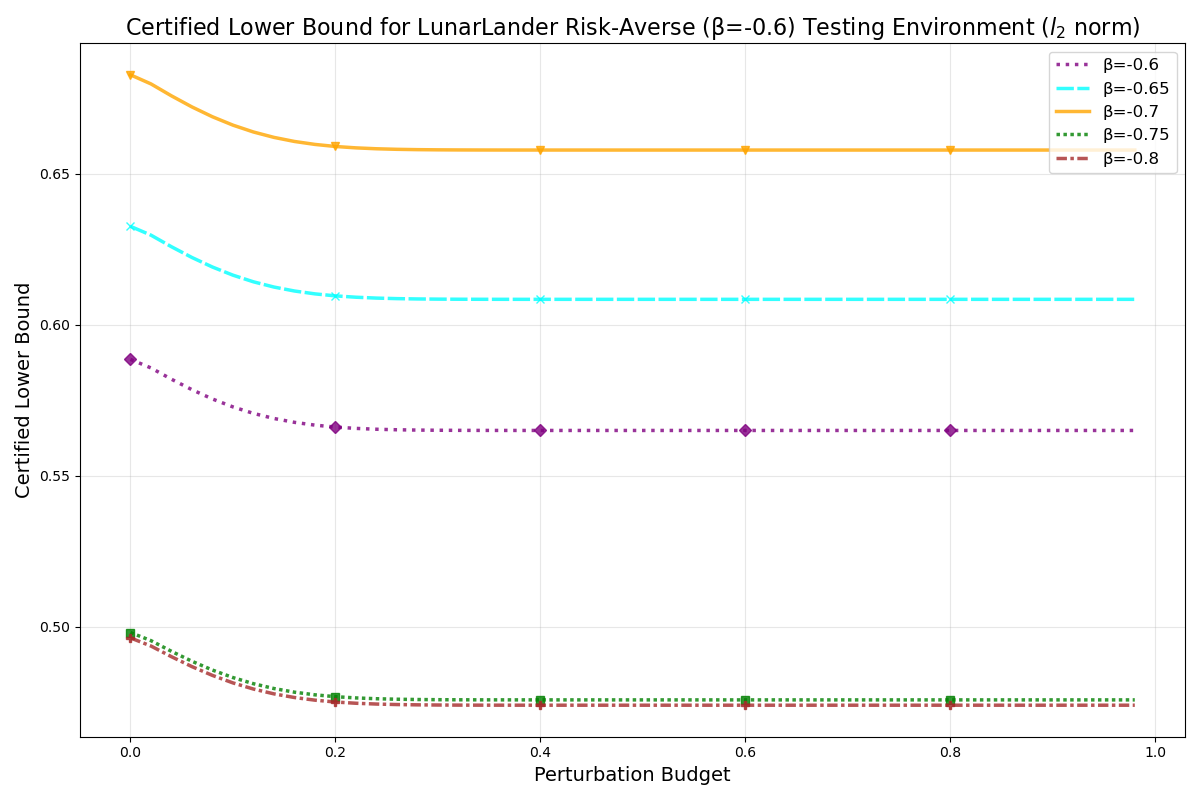}%
        \label{fig: ln_risk_averse_0.6_l2}%
    }


    \subfloat[$\beta_{\text{test}} \to 0$ (risk-neutral), $l_1$-norm]{%
        \includegraphics[width=0.45\textwidth]{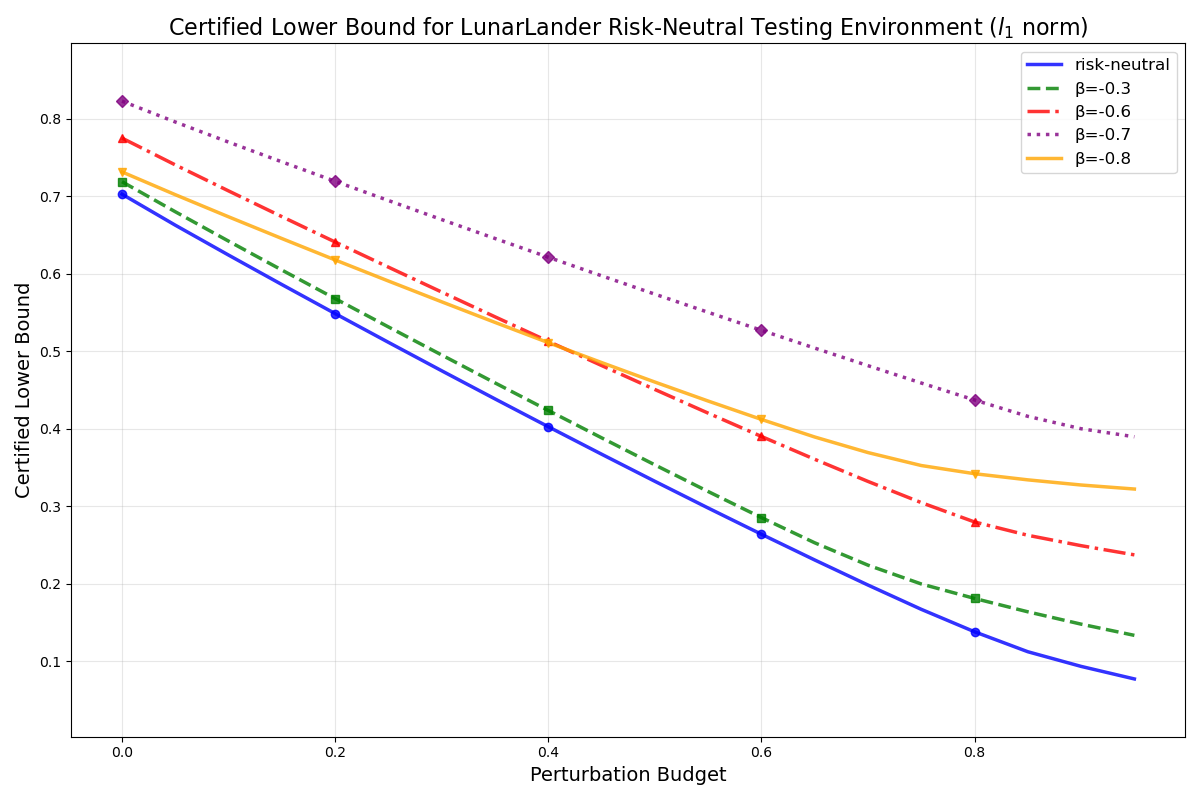}%
        \label{fig: ln_risk_neutral_l1}%
    }
    \hfill
    \subfloat[$\beta_{\text{test}}=-0.2$ (risk-averse), $l_1$-norm]{%
        \includegraphics[width=0.45\textwidth]{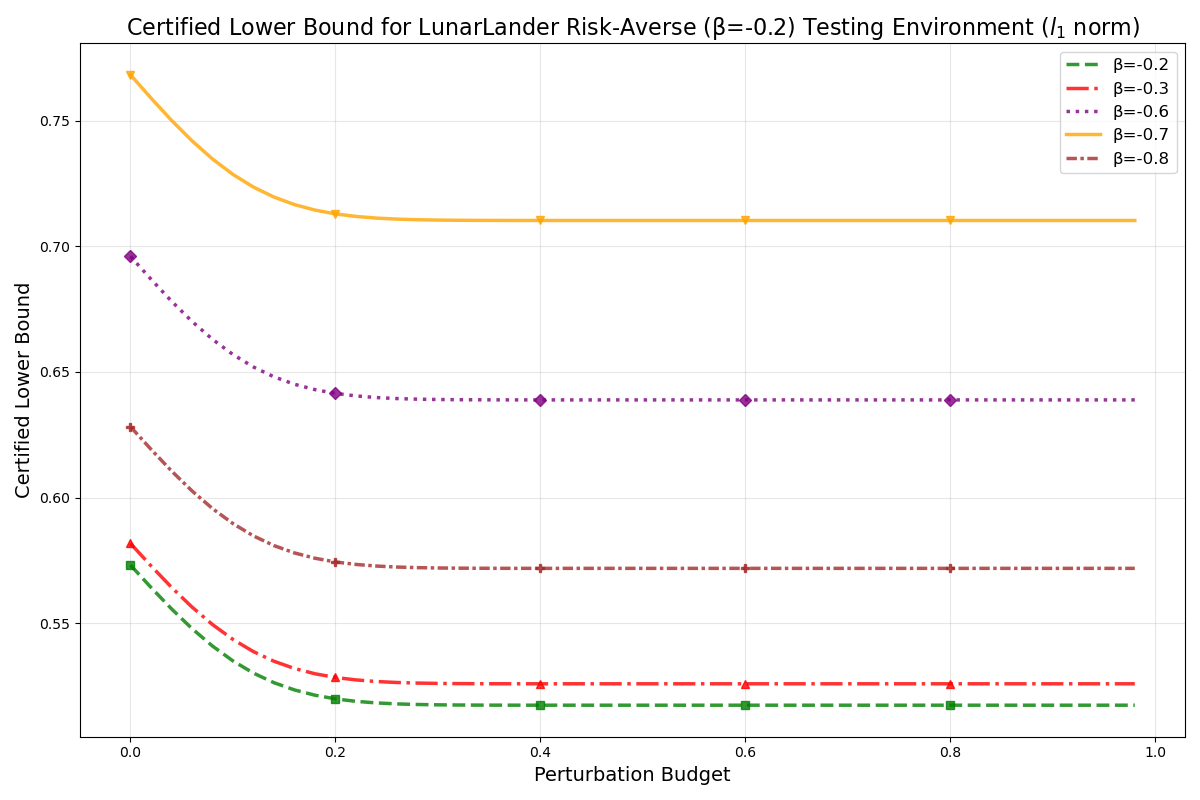}%
        \label{fig: ln_risk_averse_0.2_l1}%
    }


    \subfloat[$\beta_{\text{test}}=-0.4$ (risk-averse), $l_1$-norm]{%
        \includegraphics[width=0.45\textwidth]{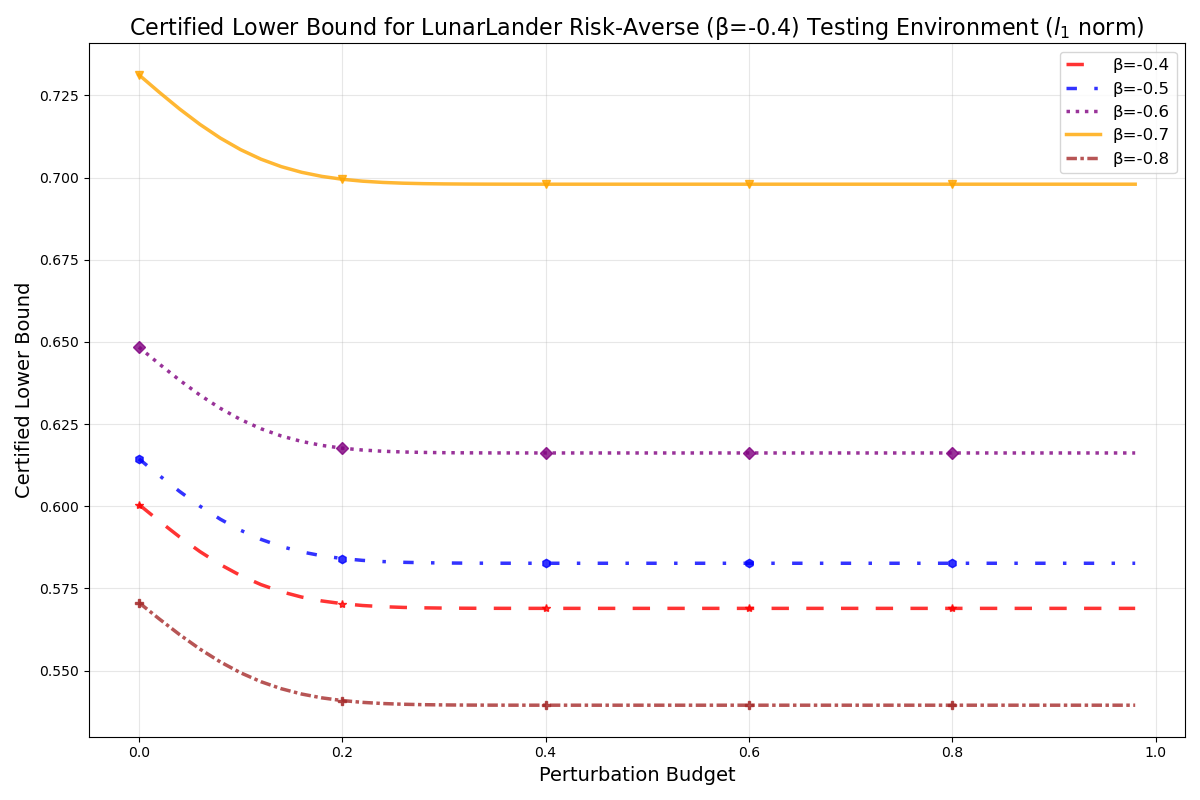}%
        \label{fig: ln_risk_averse_0.4_l1}%
    }
    \hfill
    \subfloat[$\beta_{\text{test}}=-0.6$ (risk-averse), $l_1$-norm]{%
        \includegraphics[width=0.45\textwidth]{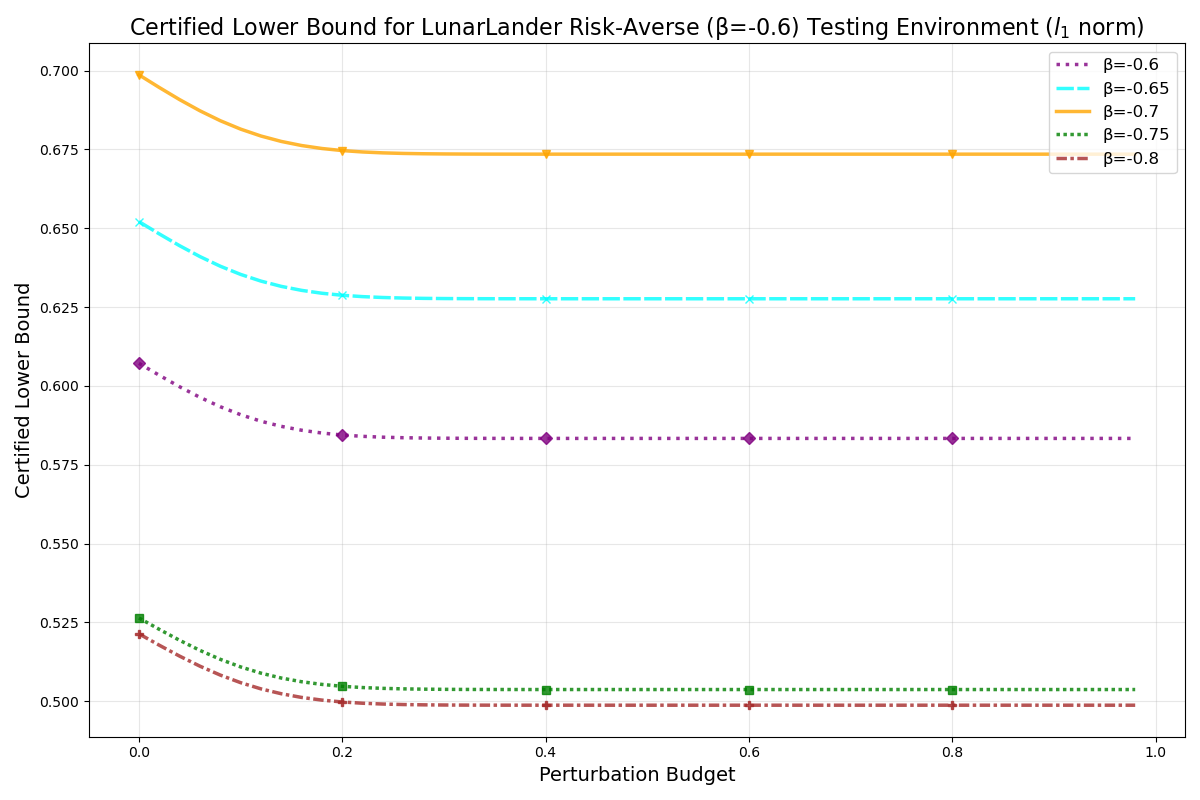}%
        \label{fig: ln_risk_averse_0.6_l1}%
    }

    \caption{Certified lower bounds for Lunar Lander with risk-neutral and risk-averse testing environments under $l_{2}$-norm (top two rows) and $l_1$-norm (bottom two rows) perturbations.}
    \label{fig: lunar lander result}
\end{figure*}

We train multiple policies using the risk-neutral objective \eqref{risk-neutral Q} and the risk-averse objective \eqref{risk-sensitive Q} with varying values of $\beta_{\text{train}}$. These policies are then evaluated under both risk-neutral ($\beta_{\text{test}} \to 0$) and risk-averse ($\beta_{\text{test}} < 0$) testing-time objectives \eqref{eq:observation objective risk-averse}, and their certified lower bounds are compared to assess policy performance across different deployment conditions.
The experiments are designed to identify the training-time risk aversion level $\beta_{\text{train}}$ that maximizes certified lower bound for each testing deployment condition. The results suggest that tuning $\beta_{\text{train}}$ can influence certified performance, which provides practical guidance for selecting the suitable training hyperparameters based on anticipated deployment scenarios.

We first conduct experiments in the Lunar Lander and CartPole environments from OpenAI Gym, both of which feature continuous state spaces and discrete action spaces \citep{towers2024gymnasium}.
We then extend our analysis to a stylized machine replacement problem \citep{puterman1994markov}, with the goal of providing interpretability into how risk-averse training environments contribute to the improvement of lower bound certificates under state adversarial perturbations.
To prevent computational issues due to vanishing exponential values arising from large rewards under negative risk aversion parameters 
$\beta$, we normalize the reward scale across all three environments.

\subsection{OpenAI Gym environment}

We first consider the Lunar Lander environment from the Box2D suite of OpenAI Gym, where the agent controls a spacecraft to land on a designated pad \citep{towers2024gymnasium}. Since small variations in state observations (e.g., position or velocity) can influence the selection of actions and ultimately affect task outcomes, this environment is well-suited for evaluating the certified lower bounds for RL policies under state adversarial perturbations.
In particular, we apply Gaussian noise with zero mean and a standard deviation of 0.05 to simulate perturbations in the continuous state observations. For a range of perturbation budgets $\epsilon \in (0, 1)$, we compute the corresponding certified lower bounds \eqref{eq:observation objective risk-averse} under $l_2$- and $l_1$-norm constraints to assess the performance of different RL policies.
We evaluate the trained policies across four testing risk levels, including a risk-neutral certification \eqref{eq:observation objective risk-averse} with $\beta_{\text{test}} \to 0$, and three risk-averse certifications \eqref{eq:observation objective risk-averse} with $\beta_{\text{test}} = {-0.2, -0.4, -0.6}$. 
For each testing scenario, we evaluate and compare the certified lower bounds of RL policies trained under different training objectives and risk aversion levels.

Figure \ref{fig: lunar lander result} presents the certified lower bounds of RL policies evaluated under multiple testing risk aversion levels in the Lunar Lander environment. The results include both risk-neutral and risk-averse testing scenarios, where certified lower bounds are evaluated under adversarial perturbations bounded by the $l_2$ norm (top two rows) and $l_1$ norm (bottom two rows).
Each curve represents the certified lower bound of an RL policy trained with a distinct $\beta_{\text{train}}$ value.
In all subfigures, we observe that the certified lower bounds are non-increasing as the perturbation budget increases.

Figures \ref{fig: ln_risk_neutral_l2} and \ref{fig: ln_risk_neutral_l1} show the certified lower bounds of policies trained under varying levels of risk aversion, evaluated in a risk-neutral testing environment with $l_2$- and $l_1$-norm bounded adversarial perturbations, respectively.
We select the curve corresponding to the policy trained with the risk-neutral objective as the baseline for comparison. 
We observe that risk-averse training generally produces policies with higher certified lower bounds than those obtained through risk-neutral training, particularly under larger perturbation budgets. This suggests that incorporating risk aversion during training enhances the certified robustness of policies in risk-neutral evaluation settings with adversarial perturbations.
Moreover, we also find that as the risk aversion parameter $\beta_{\text{train}}$ decreases, the certified lower bounds initially increase and then decrease. Specifically, as $\beta_{\text{train}}$ decreases from $-0.3$ to $-0.7$, the corresponding lower bound curves gradually improve, but at $\beta_{\text{train}} = -0.8$, the lower bound decreases, exhibiting a non-monotonic trend. This indicates that excessive risk aversion during training can lead to over conservative policies and consequently degrade certified performance under perturbations.

\begin{figure*}[htbp]
    \centering
    
    \subfloat[$\beta_{\text{test}} \to 0$ (risk-neutral), $l_{2}$-norm]{%
        \includegraphics[width=0.45\textwidth]{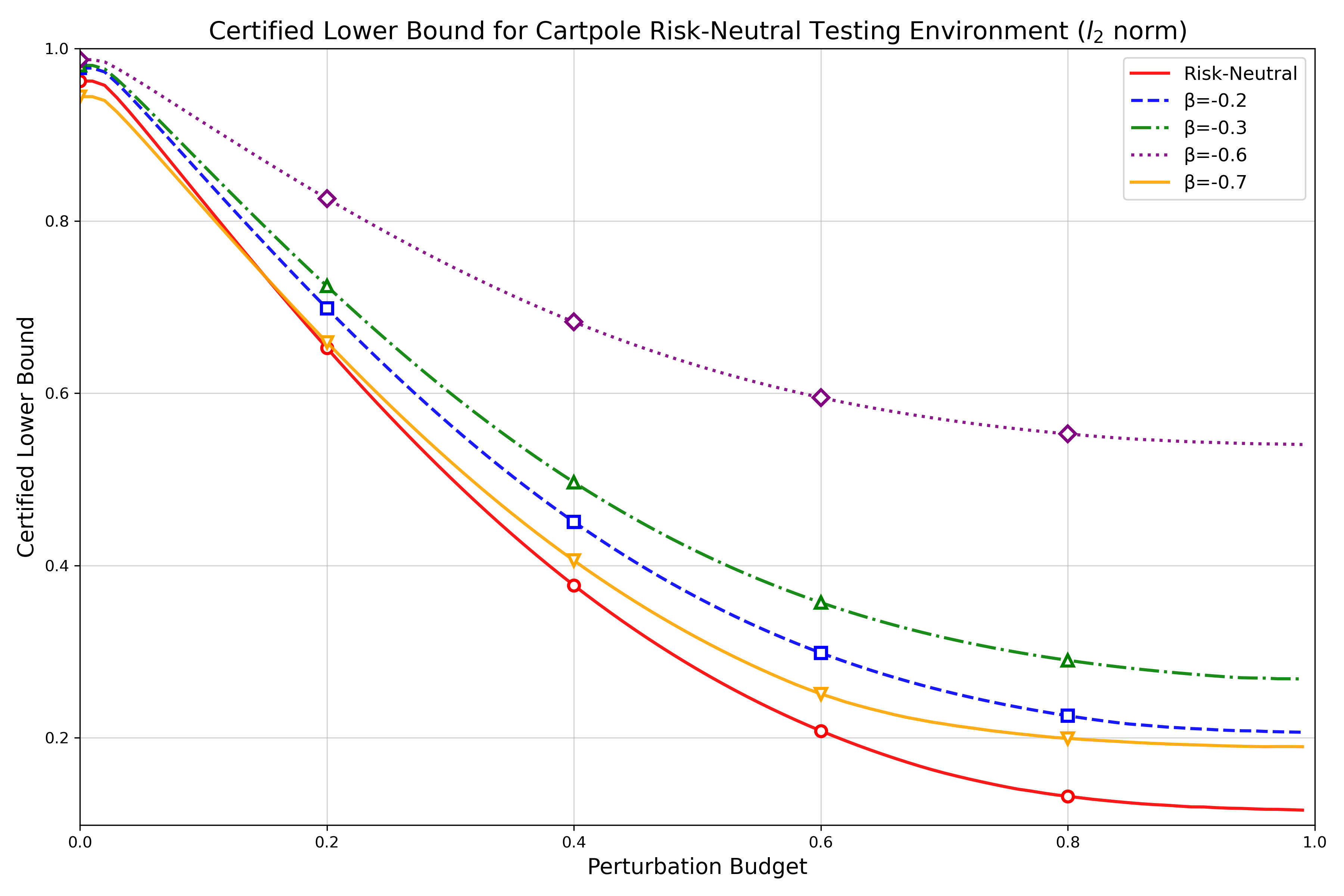}%
        \label{fig: cartpole_risk_neutral_l2}%
    }
    \hfill
    \subfloat[$\beta_{\text{test}}=-0.1$ (risk-averse), $l_{2}$-norm]{%
        \includegraphics[width=0.45\textwidth]{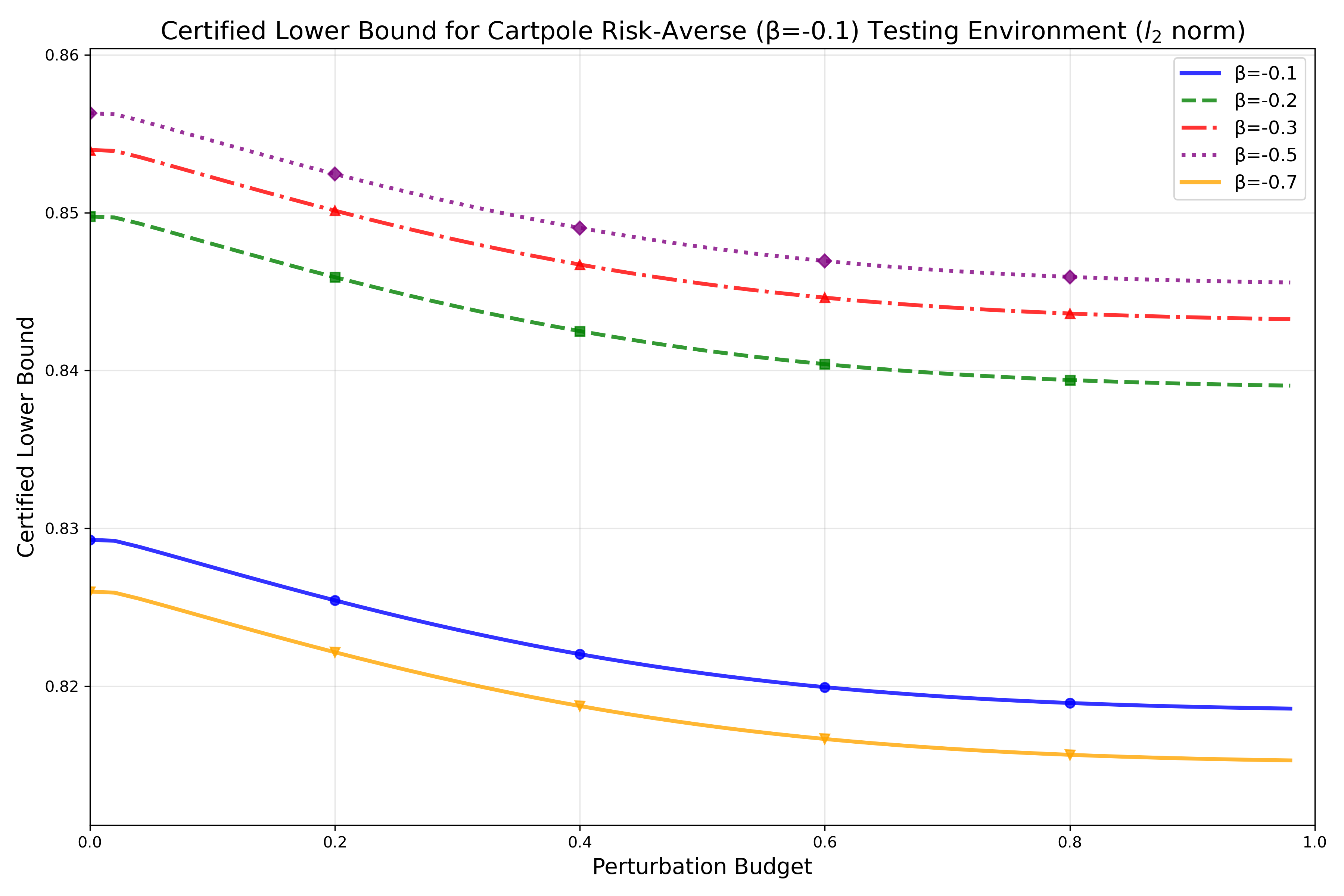}%
        \label{fig: cartpole_risk_averse_0.1_l2}%
    }

    \vspace{0.2cm}

    \subfloat[$\beta_{\text{test}}=-0.4$ (risk-averse), $l_{2}$-norm]{%
        \includegraphics[width=0.45\textwidth]{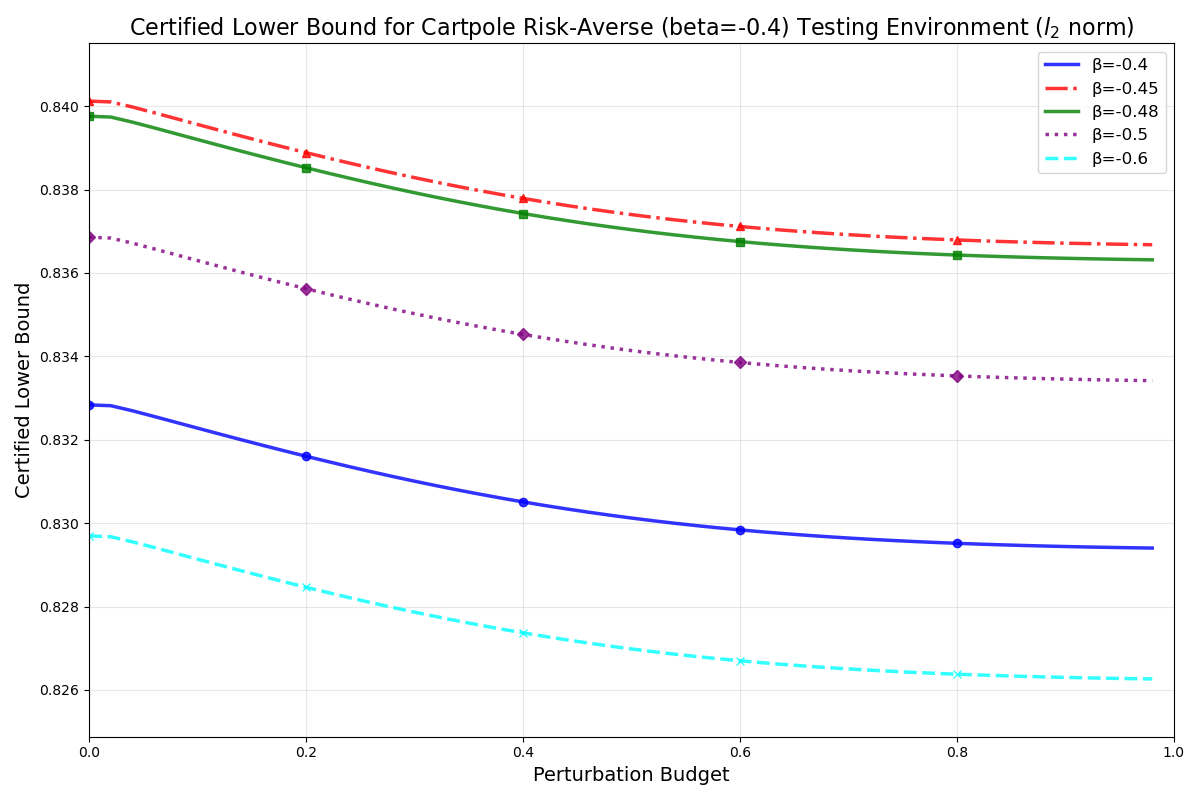}%
        \label{fig: cartpole_risk_averse_0.4_l2}%
    }
    \hfill
    \subfloat[$\beta_{\text{test}}=-0.6$ (risk-averse), $l_{2}$-norm]{%
        \includegraphics[width=0.45\textwidth]{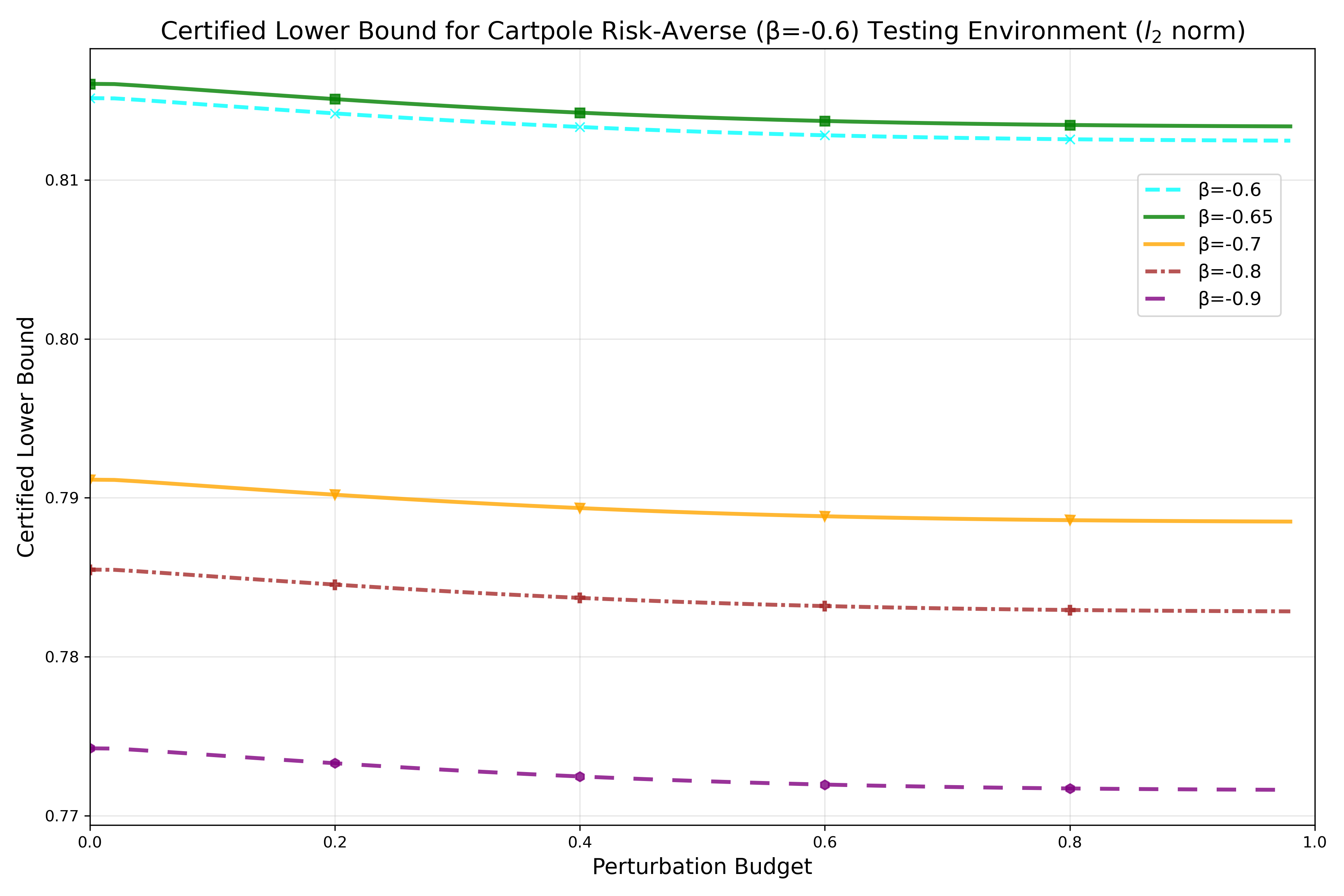}%
        \label{fig: atari_riskaverse0.7 result}%
    }

    \vspace{0.2cm}

    \subfloat[$\beta_{\text{test}} \to 0$ (risk-neutral), $l_{1}$-norm]{%
        \includegraphics[width=0.45\textwidth]{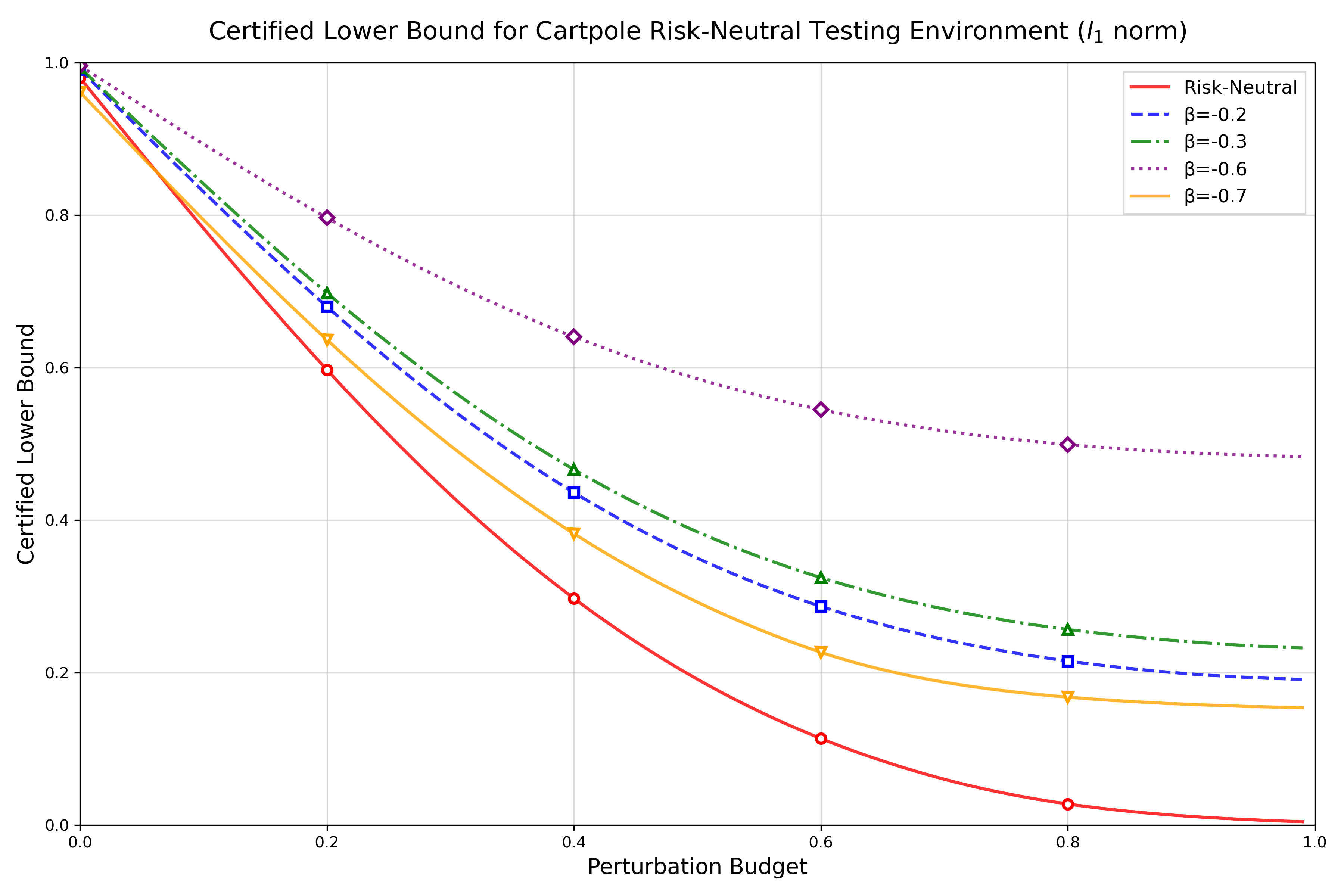}%
        \label{fig: cartpole_risk_neutral_l1}%
    }
    \hfill
    \subfloat[$\beta_{\text{test}}=-0.1$ (risk-averse), $l_{1}$-norm]{%
        \includegraphics[width=0.45\textwidth]{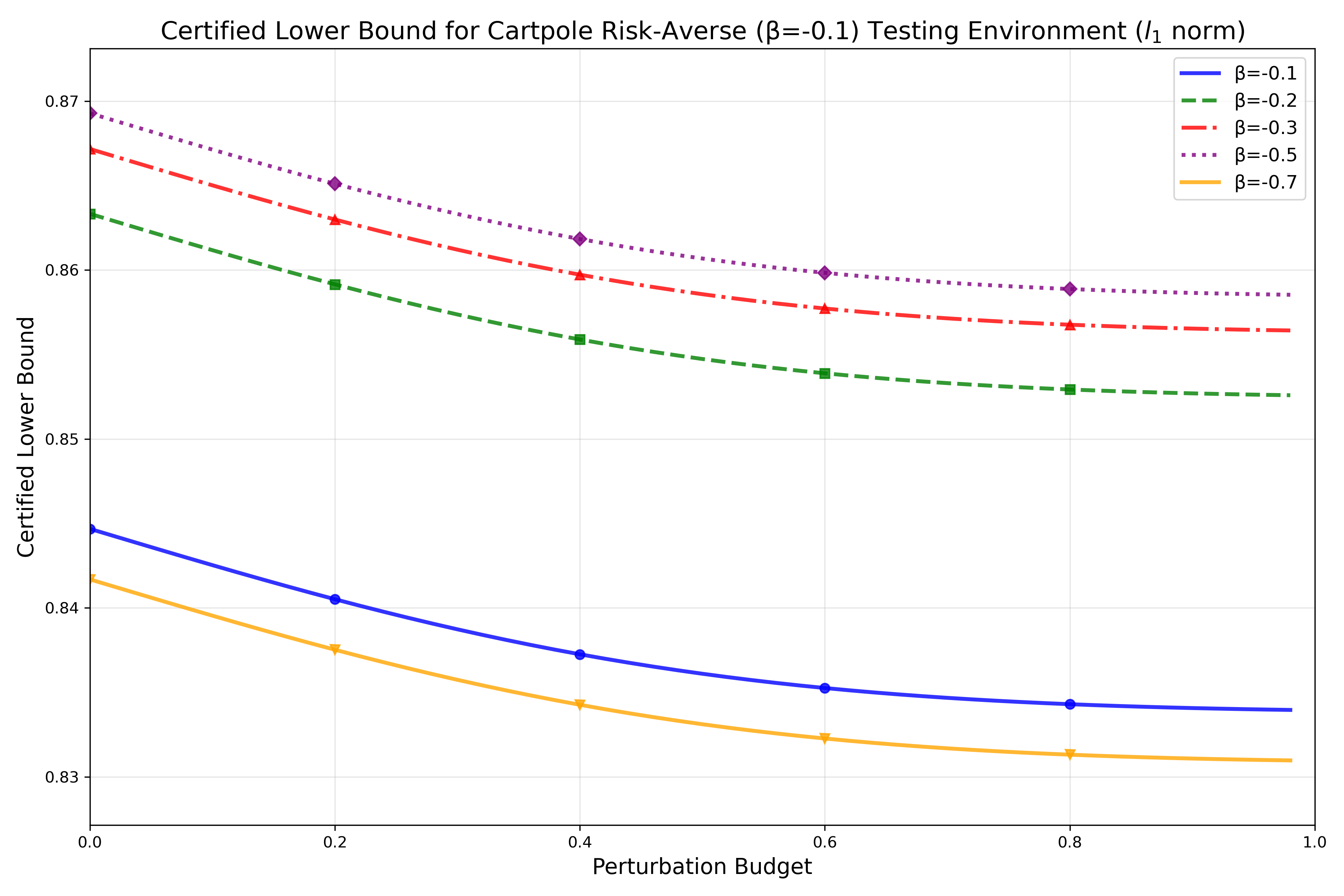}%
        \label{fig: cartpole_risk_averse_0.1_l1}%
    }

    \vspace{0.2cm}

    \subfloat[$\beta_{\text{test}}=-0.4$ (risk-averse), $l_{1}$-norm]{%
        \includegraphics[width=0.45\textwidth]{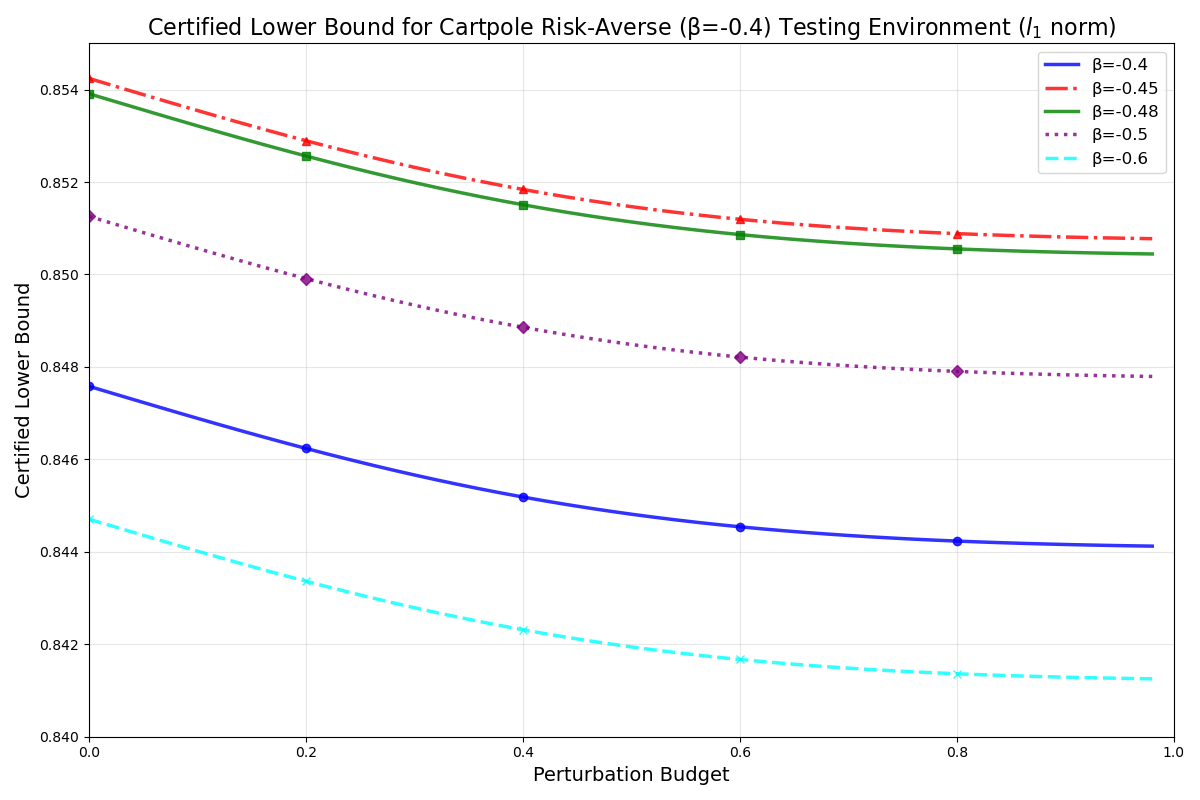}%
        \label{fig: cartpole_risk_averse_0.4_l1}%
    }
    \hfill
    \subfloat[$\beta_{\text{test}}=-0.6$ (risk-averse), $l_{1}$-norm]{%
        \includegraphics[width=0.45\textwidth]{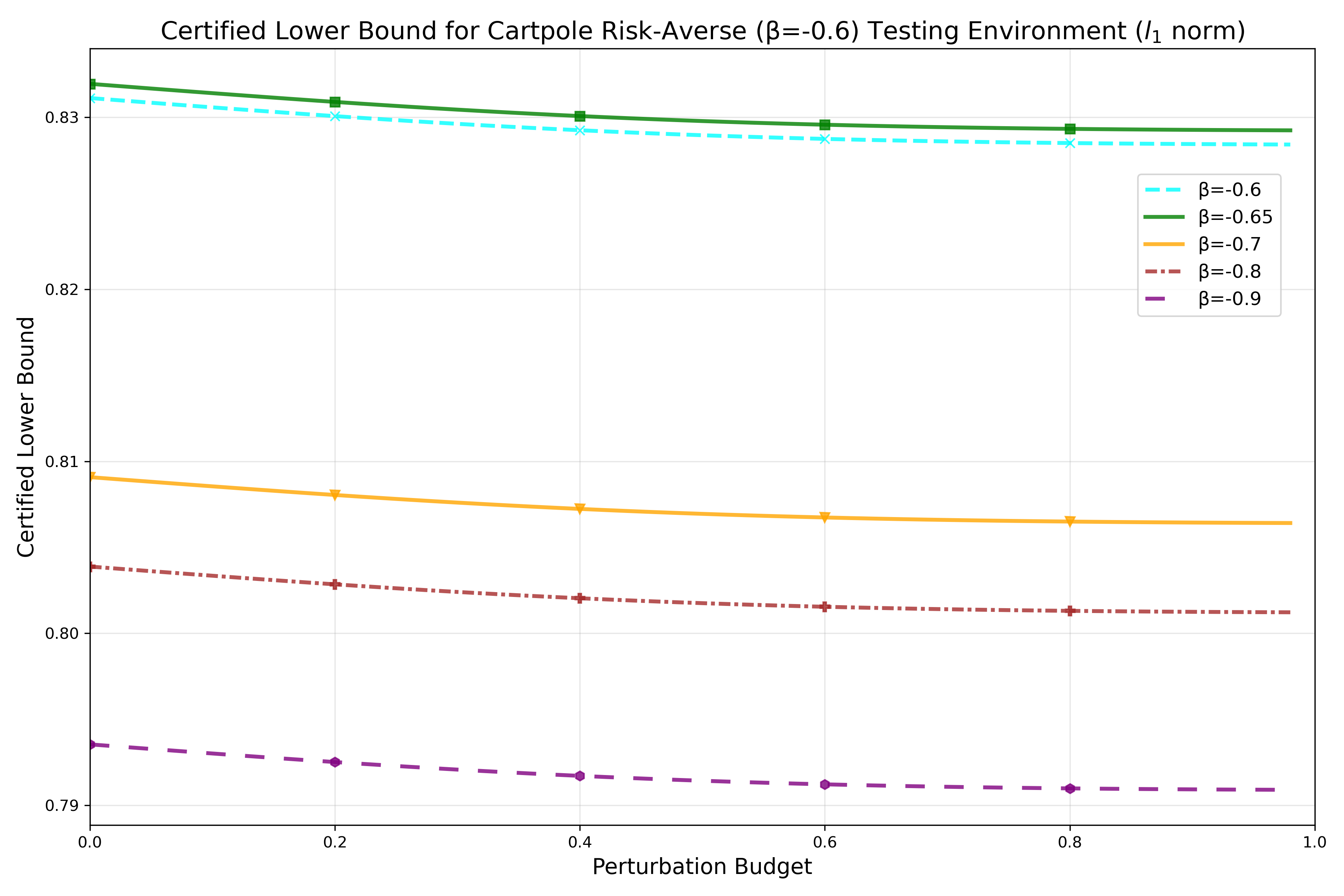}%
        \label{fig: cartpole_risk_averse_0.6_l1}%
    }

    \caption{Certified lower bounds for CartPole with risk-neutral and risk-averse testing environments under $l_{2}$-norm (top two rows) and $l_1$-norm (bottom two rows) perturbations.}
    \label{fig: cartpole}
\end{figure*}

Figures \ref{fig: ln_risk_averse_0.2_l2} to \ref{fig: ln_risk_averse_0.6_l2} and \ref{fig: ln_risk_averse_0.2_l1} to \ref{fig: ln_risk_averse_0.6_l1} present the certified lower bounds evaluated in risk-averse testing scenarios under $l_2$ and $l_1$ norm-bounded perturbations, respectively. 
Among these subfigures, 
we observe that the certified lower bounds initially increase and subsequently decrease as $\beta_{\text{train}}$ decreases, which is consistent with the pattern observed in the risk-neutral testing case, where relatively moderate risk aversion tends to improve robustness, while excessive aversion can lead to performance degradation.

We also consider the CartPole environment from the classic control suite of OpenAI Gym, in which an agent needs to balance a pole by moving a cart left or right \citep{towers2024gymnasium}. In the CartPole environment, we perturb the continuous state observations using Gaussian noise with zero mean and a standard deviation of 0.2 to simulate adversarial state uncertainty. The CartPole experiment follows the same setup as Lunar Lander. Specifically, we compare the certified lower bounds of various RL policies under both risk-neutral and risk-averse evaluation settings, measured with respect to adversarial perturbations bounded by the $l_2$ and $l_1$ norms. The results are presented in Figure \ref{fig: cartpole}. Figure \ref{fig: cartpole} shows similar trends to those observed in Figure \ref{fig: lunar lander result}, supporting the consistency of our findings.

\subsection{Machine replacement problem}

In order to gain more insights on why increasing risk aversion during training can lead to more robust policies and improved testing performance, we next evaluate our certification framework on the machine replacement problem \citep{puterman1994markov}.

We consider a machine with a continuous state space $\mcs=[0,10)$, where smaller state values correspond to healthier operating conditions and larger values indicate increasing levels of deterioration. Specifically, the interval $[0,1)$ represents a like-new condition, whereas states closer to the upper bound correspond to severely degraded conditions. For computational tractability, the continuous state space is uniformly partitioned into 10 intervals: $[0,1), [1,2), \ldots, [9,10)$.
Each continuous state \( s \in S \) is discretized using the floor function \( \lfloor s \rfloor \), which denotes the greatest integer less than or equal to \( s \). This yields a finite set of discretized degradation levels indexed from 0 to 9.

At each decision step, the agent makes a binary decision: to do nothing $(a=0)$ or to perform maintenance $(a=1)$.
The last degradation interval $[9,10)$ is an absorbing state, where the agent is required to perform maintenance. The cost function is defined as:
\[
c(s, a) = 
\begin{cases}
0, & a = 0, s\in [0,9)\\
2, & a = 1, s \in [0,9)\\
5, & a = 1,\, s \in [9,10).
\end{cases}
\]
When action \( a = 1 \) is taken, the machine is fully repaired and returns to the like-new interval \([0, 1)\) with probability 1. When doing-nothing action (\( a = 0 \)) is selected, the system evolves according to a Poisson deterioration process. Let \( i = \lfloor s_t \rfloor \) denote the discretized degradation level of the current state \( s_t \).
The transition probability to the next discretized state \( j \geq i \) is given by a truncated Poisson distribution with a state-dependent rate parameter \( \lambda_i \):
\begin{equation*}
    \mathbb{P}(\lfloor s_{t+1} \rfloor = j \mid \lfloor s_t \rfloor = i, a_t = 0) =
    \frac{\lambda_i^{j-i} e^{-\lambda_i}}{(j-i)! Z_i}, 
\end{equation*}
where \( j = i, i+1, \dots, 8 \) and \( Z_i = \sum_{k=0}^{8 - i} \frac{\lambda_i^k e^{-\lambda_i}}{k!} \) is the normalization constant ensuring a valid probability distribution. 
The rate parameter \( \lambda_i \) varies with the discretized degradation level \( i \) and increases with the degree of deterioration, taking values \( \{0.1, 0.2, 0.3, 0.4, 0.5, 0.6, 0.8, 1.0, 1.5\} \) for \( i = 0, 1, \ldots, 8 \), respectively.
The discretized state $\lfloor s\rfloor=9$ is treated as an absorbing state under the action $a=0$. 

The machine replacement problem traditionally aims to minimize the expected cumulative cost. To maintain consistency with the previous experiments, we reformulate the objective as maximizing the cumulative negative cost, interpreted as a reward in the RL framework.

In line with the previous experiments, we assess the certified lower bounds across different testing conditions to investigate the impact of training-time risk preferences on certified performance. In these evaluations, the underlying state is perturbed by zero-mean Gaussian noise with a standard deviation of 0.5. Figures \ref{fig: mr_risk_neutral_l2} and \ref{fig: mr_risk_neutral_l1} present the certified lower bounds of the policies evaluated under a risk-neutral testing condition, with adversarial perturbations bounded by the $l_2$ and $l_1$ norms, respectively.
We observe that policies trained with risk-averse objectives tend to achieve higher certified lower bounds than risk-neutral ones, particularly under larger perturbation budgets. Moreover, as the degree of risk aversion decreases, the certified lower bounds initially increase but decrease at
$\beta_{\text{train}}=-0.8$, indicating a non-monotonic relationship between training-time risk aversion and testing-time certification performance. This pattern can be explained by the structural property of the optimal policy in the machine replacement problem. Specifically, the optimal policy is characterized by a threshold $s^{*}$: the agent performs maintenance $(a=1)$ when $\lfloor s \rfloor \geq s^{*}$ and does nothing $(a=0)$ otherwise \citep{puterman1994markov}. When the policy is trained under a risk-averse objective, the threshold tends to be lower, leading the agent to select the repair action at earlier stages of degradation. Such behavior enhances policy robustness under perturbed observations, as the agent is more likely to make the same maintenance decision even when the observed state deviates from the true underlying state. As a result, the certified lower bounds of risk-averse policies can outperform those of risk-neutral ones, particularly under testing conditions with larger perturbation budgets. However, when the risk-aversion level becomes excessively strong, the learned policy turns overly conservative, initiating maintenance too early and thereby causing the certified lower bounds to deteriorate during testing.

\begin{figure*}[htbp]
    \centering

    \subfloat[$\beta_{\text{test}} \to 0$ (risk-neutral), $l_{2}$-norm]{%
        \includegraphics[width=0.45\textwidth]{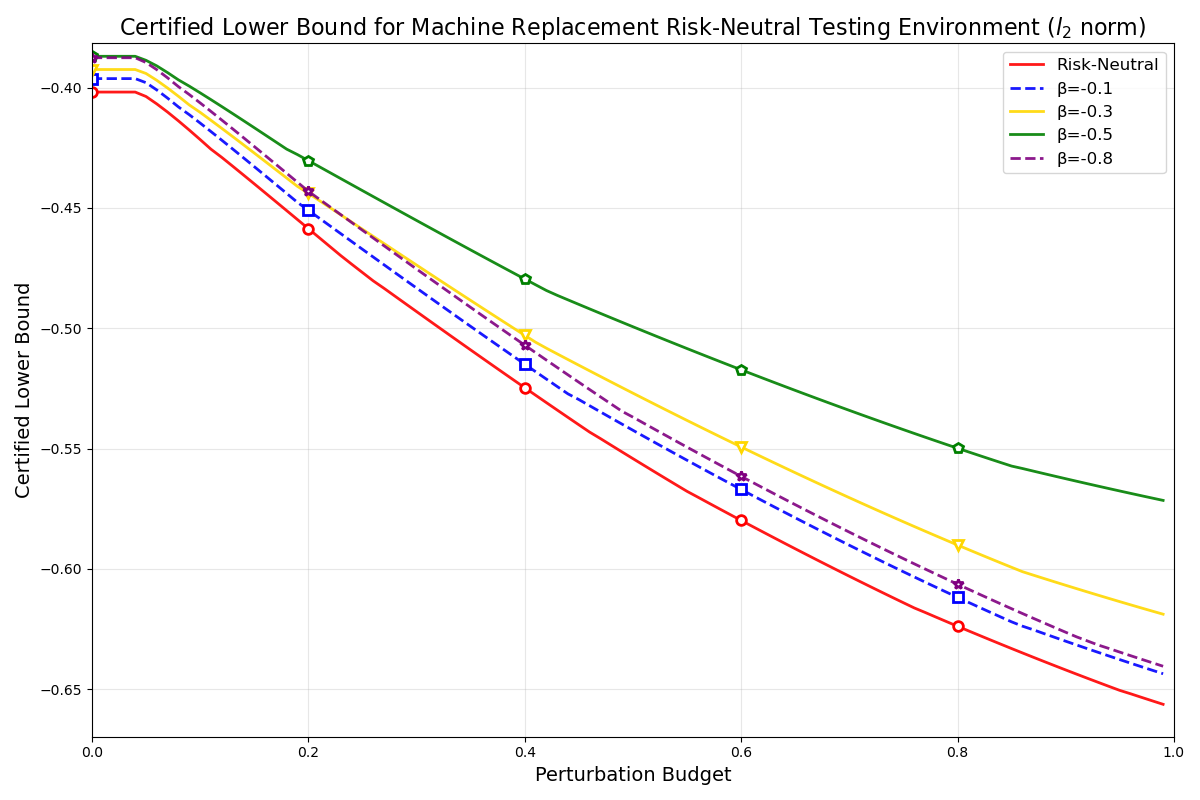}%
        \label{fig: mr_risk_neutral_l2}
    }
    \hfill
    \subfloat[$\beta_{\text{test}}=-0.1$ (risk-averse), $l_{2}$-norm]{%
        \includegraphics[width=0.45\textwidth]{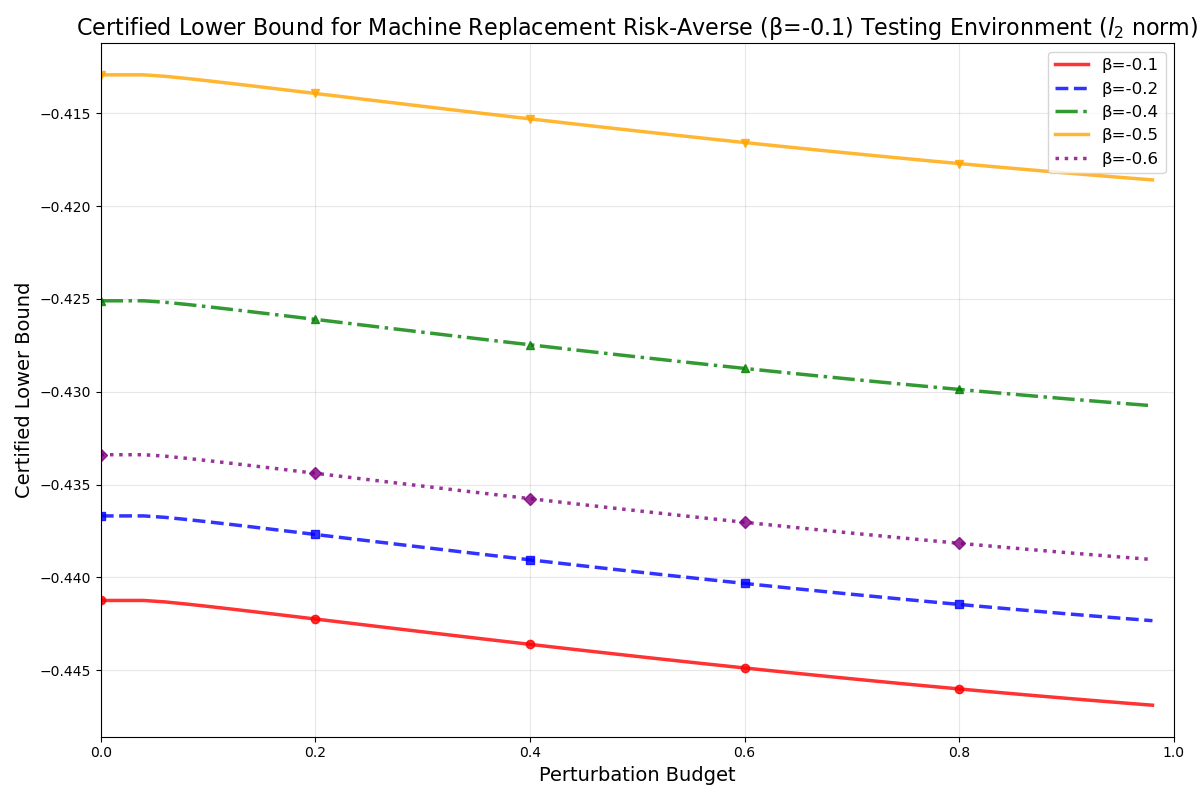}%
        \label{fig: mr_risk_averse_0.1_l2}
    }

    \vspace{0.2cm}

    \subfloat[$\beta_{\text{test}}=-0.3$ (risk-averse), $l_{2}$-norm]{%
        \includegraphics[width=0.45\textwidth]{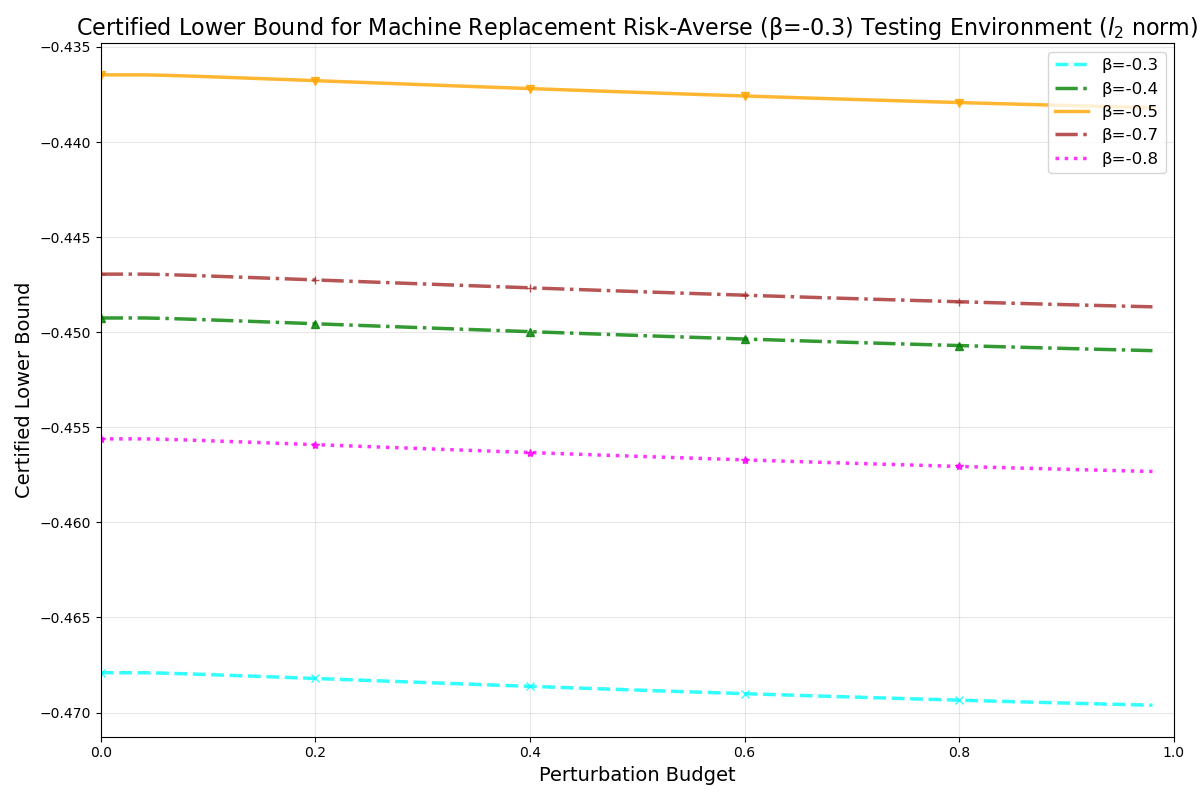}%
        \label{fig: mr_risk_averse_0.3_l2}
    }
    \hfill
    \subfloat[$\beta_{\text{test}}=-0.6$ (risk-averse), $l_{2}$-norm]{%
        \includegraphics[width=0.45\textwidth]{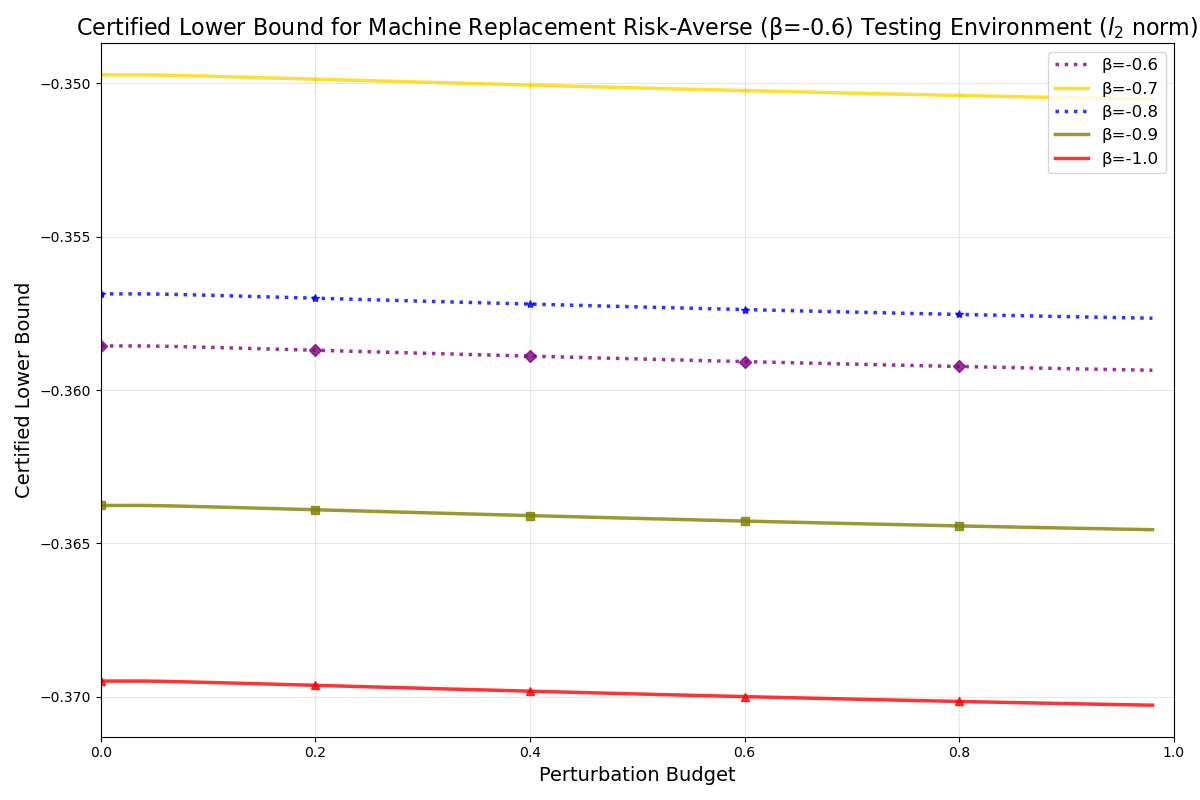}%
        \label{fig: mr_risk_averse_0.6_l2}
    }

    \vspace{0.2cm}

    \subfloat[$\beta_{\text{test}} \to 0$ (risk-neutral), $l_{1}$-norm]{%
        \includegraphics[width=0.45\textwidth]{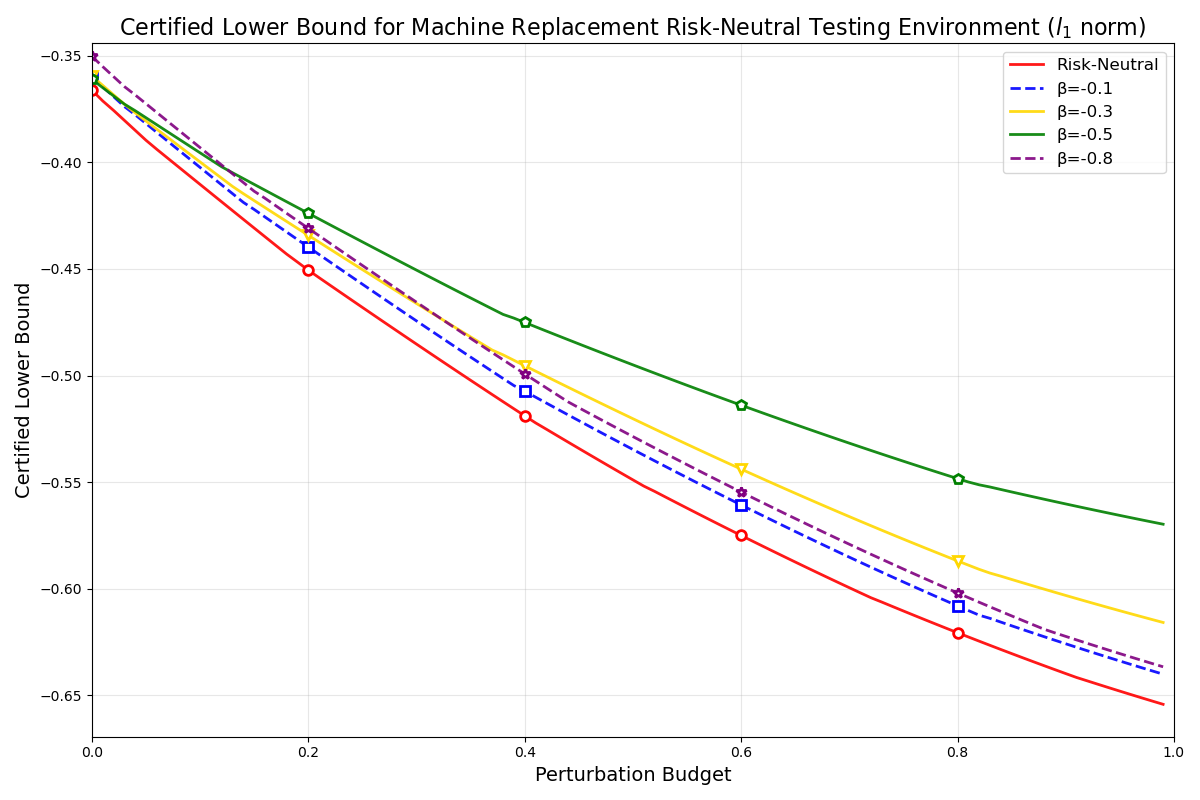}%
        \label{fig: mr_risk_neutral_l1}
    }
    \hfill
    \subfloat[$\beta_{\text{test}}=-0.1$ (risk-averse), $l_{1}$-norm]{%
        \includegraphics[width=0.45\textwidth]{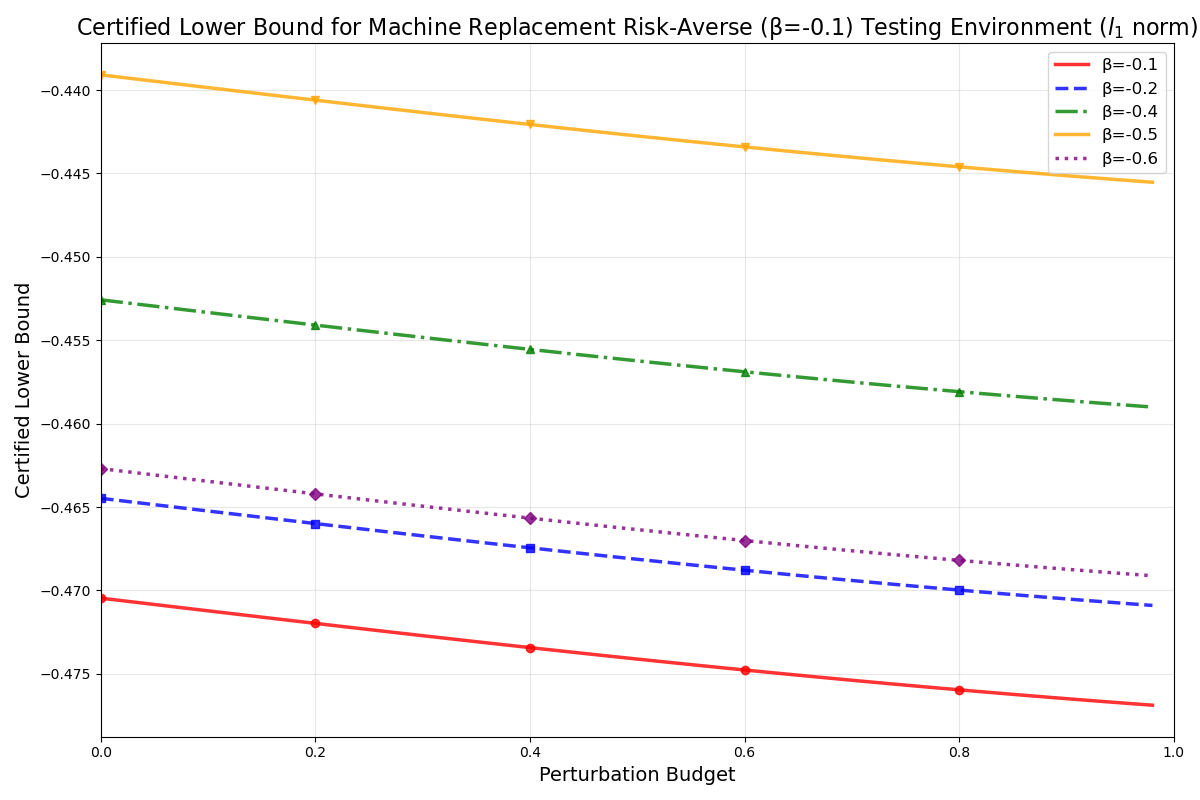}%
        \label{fig: mr_risk_averse_0.1_l1}
    }

    \vspace{0.2cm}

    \subfloat[$\beta_{\text{test}}=-0.3$ (risk-averse), $l_{1}$-norm]{%
        \includegraphics[width=0.45\textwidth]{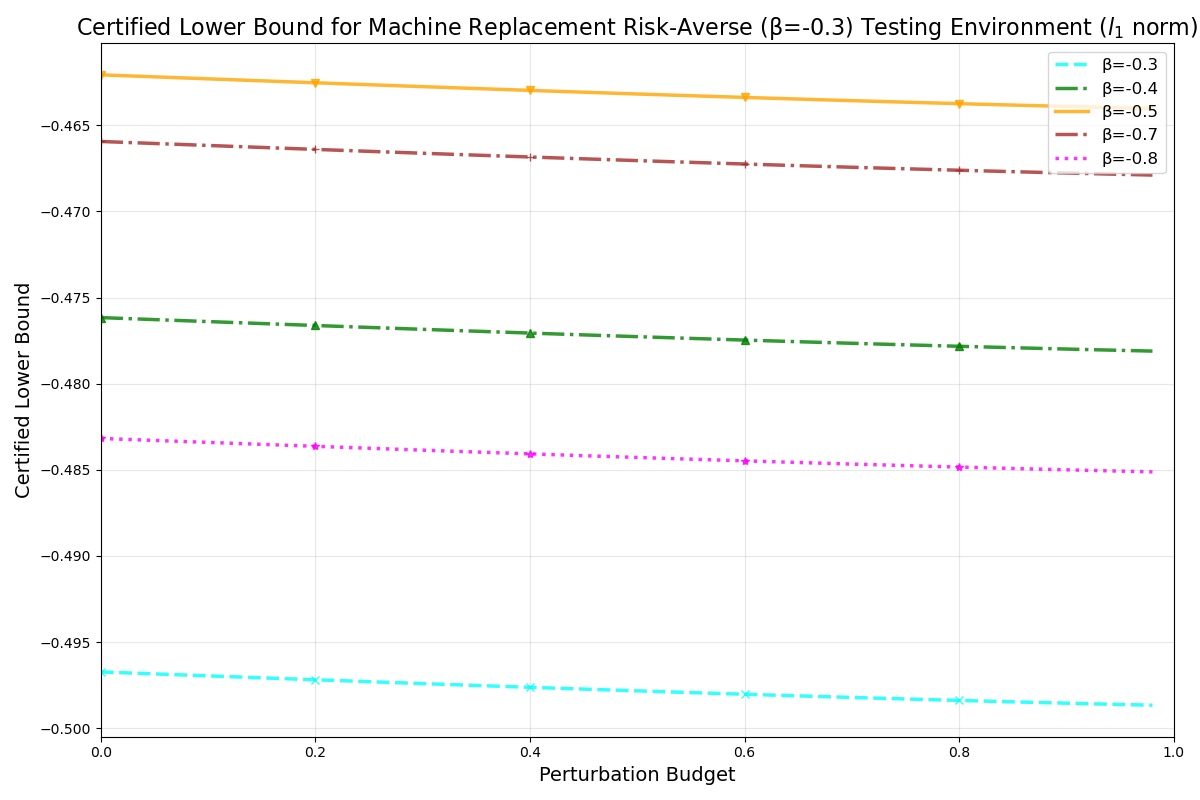}%
        \label{fig: mr_risk_averse_0.3_l1}
    }
    \hfill
    \subfloat[$\beta_{\text{test}}=-0.6$ (risk-averse), $l_{1}$-norm]{%
        \includegraphics[width=0.45\textwidth]{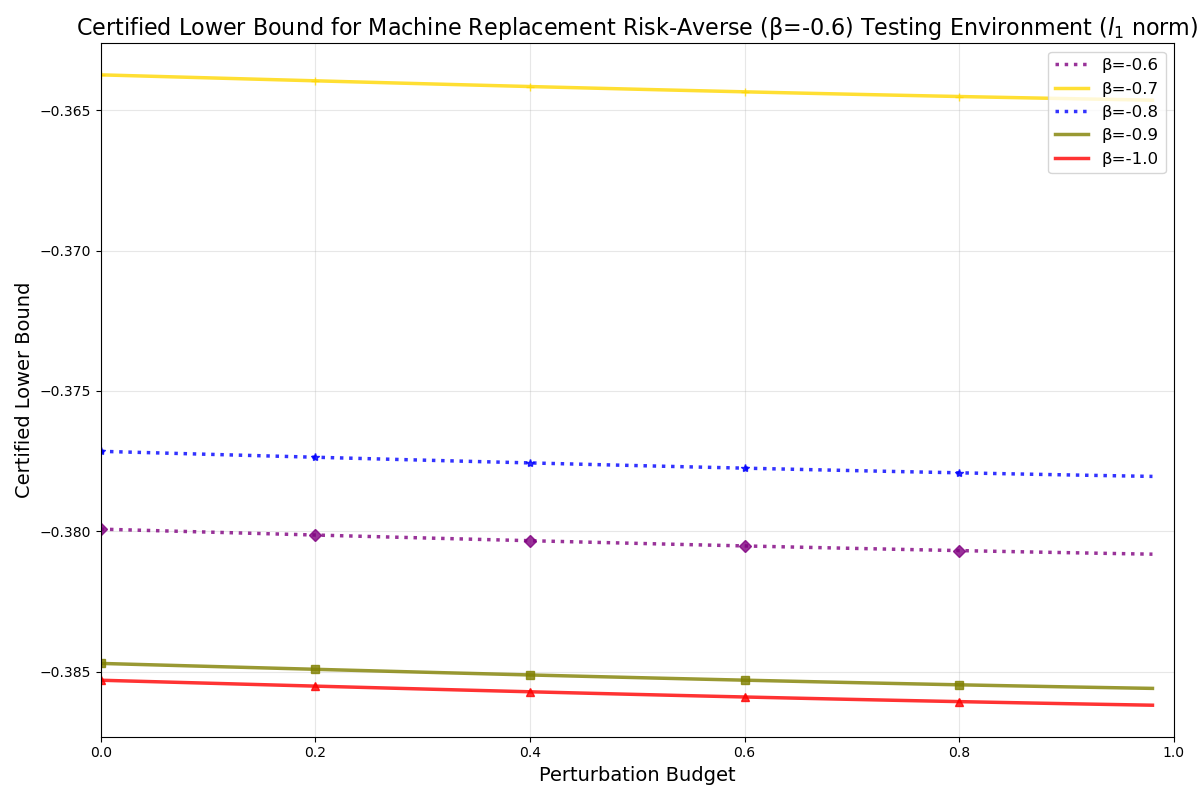}%
        \label{fig: mr_risk_averse_0.6_l1}
    }

    \caption{Certified lower bounds for the machine replacement problem with risk-neutral and risk-averse testing environments under $l_{2}$-norm (top two rows) and $l_1$-norm (bottom two rows) perturbations.}
    \label{fig: machine replacement risk-averse result}
\end{figure*}

Figures \ref{fig: mr_risk_averse_0.1_l2} to \ref{fig: mr_risk_averse_0.6_l2} and \ref{fig: mr_risk_averse_0.1_l1} to \ref{fig: mr_risk_averse_0.6_l1} present the certified lower bounds of policies trained under varying levels of risk aversion, evaluated in testing environments with parameters $\beta_{\text{test}}=-0.1, -0.3$ and $-0.6$. We observe that, under both \( l_2 \)- and \( l_1 \)-bounded perturbations within the same testing environment, the certified lower bounds initially increase and then decrease as the training-time risk parameter \( \beta_{\text{train}} \) becomes more negative. This non-monotonic trend is consistent with the patterns previously observed in the Lunar Lander and CartPole environments. 

\section{Conclusion}
In this paper, we propose a novel certification framework for evaluating the robustness of RL policies under state adversarial perturbations. 
Specifically, we formulate the certification objective as the expectation of exponential utility returns to reflect the agent’s risk preference under $l_{p}$-norm bounded perturbations ($1\leq p < \infty$). 
By introducing a $\phi$-divergence–based constraint on the perturbation budget, we construct a convex relaxation of the risk-sensitive certification objective and derive its tractable dual formulation.
We conduct extensive experiments on both OpenAI Gym environments and a machine replacement task. In each setting, we train policies using a range of training-time risk aversion levels ($\beta_{\text{train}}$). We then evaluate their certified lower bounds under different evaluation-time parameters ($\beta_{\text{test}}$) to analyze how risk preferences influence certified robustness.
Our results show that risk-averse training generally results in policies with higher certified lower bounds than risk-neutral training, especially under larger perturbation budgets. Moreover, in both risk-neutral and risk-averse evaluation settings, increasing risk aversion during training tends to improve certified robustness initially, while excessive risk aversion will ultimately degrade certification performance due to overly conservative policies.

\begin{appendices}

\section{Equivalence Between Trajectory Distribution Divergence and Observation Distribution Divergence}\label{appendix: observation reduction}

In this appendix, we reduce the $\phi$-divergence between trajectory distributions $q(\tau)$ and $p(\tau)$, with $\tau = (s_1, o_1, a_1, \ldots, s_T, o_T, a_T)$, to the $\phi$-divergence between observation distributions $\mu(\cdot \mid s_1 + \delta_1)$ and $\mu(\cdot \mid s_1)$, given a fixed initial state $s_1$ and assuming that only $s_1$ is perturbed by $\delta_1$.

\begin{align}
   & D_{\phi}(q(\tau)||p(\tau)) \nonumber\\
   &=\int_{\tau}\phi\left(\frac{q(\tau)}{p(\tau)}\right)dp(\tau) \nonumber\\
   &=\int_{\tau}\phi\left(\frac{\mu(o_1|s_1+\delta_1)}{\mu(o_1|s_1)} \times \frac{\bcancel{\pi(a_1|o_1)}}{\bcancel{\pi(a_1|o_1)}} \times\prod_{t=2}^{T} \frac{\bcancel{P(s_{t}|s_{t-1}, a_{t-1})\mu(o_t|s_t)\pi(a_t|o_t)}}{\bcancel{P(s_{t}|s_{t-1}, a_{t-1})\mu(o_t|s_t)\pi(a_t|o_t))}} \right)dp(\tau) \label{explain p and q} \\
   &= \int_{o_1}\phi\left(\frac{\mu(o_1|s_1+\delta_1)}{\mu(o_1|s_1)}\right) \mu(o_1|s_1)  \nonumber\\
   &\quad \times \left( \int_{a_1,s_2,o_2,a_2,\ldots,s_{T},o_{T}, a_{T}} P(a_1,\ldots,s_{T},o_{T}, a_{T}|o_1,s_1) da_1 ds_{2}do_{2}da_{2}\ldots ds_{T}do_{T}da_{T} \right) do_1 \nonumber\\
   &= \int_{o_1}\mu(o_1|s_1) \phi\left(\frac{\mu(o_1|s_1+\delta_1)}{\mu(o_1|s_1)}\right) do_1 \label{explain integral} \\
   &= D_{\phi}(\mu(\cdot|s_1+\delta_1)||\mu(\cdot|s_1)),\nonumber
\end{align}
where \eqref{explain p and q} is due to the expanded form \eqref{eq: expression of q} and \eqref{eq: expression of p} of the trajectory distributions $q(\tau)$ and $p(\tau)$ respectively. 
\eqref{explain integral} follows from marginalizing the integral over all trajectory variables to retain only the initial observation $o_1$.


\section{Proof of Proposition \ref{prop1}}\label{appendix: proof of l1}
We first prove that the total variation (TV) divergence between the two Gaussian distributions 
$\mu = \mathcal{N}(s_1 + \delta, \sigma^2 I_d)$ and 
$\nu = \mathcal{N}(s_1, \sigma^2 I_d)$ 
is equal to the TV divergence between their corresponding standardized distributions 
$\mu_0 = \mathcal{N}(\tilde{\delta}, I_d)$ and 
$\nu_0 = \mathcal{N}(0, I_d)$, 
where $\tilde{\delta} = \delta / \sigma$.

By definition, the TV divergence between distributions $\mu$ and $\nu$ is given by
$$D_{TV}(\mu \| \nu) = \frac{1}{2} \int_{\mathbb{R}^d} |p(x) - q(x)| dx,$$
where $p(x) = \frac{1}{(2\pi\sigma^2)^{d/2}} \exp\left(-\frac{1}{2\sigma^2}\|x - (s_1 + \delta)\|^2\right)$ and $q(x) = \frac{1}{(2\pi\sigma^2)^{d/2}} \exp\left(-\frac{1}{2\sigma^2}\|x - s_1\|^2\right)$ denote the probability density functions of the Gaussian distributions $\mu$ and $\nu$, respectively. 

Applying the substitution $y = \frac{x - s_1}{\sigma}$ with $dx = \sigma^d dy$, we obtain:
\begin{align}
& D_{TV}(\mu \| \nu) \\
&= \frac{1}{2} \int_{\mathbb{R}^d} \left|\frac{1}{(2\pi\sigma^2)^{d/2}} \exp\left(-\frac{1}{2\sigma^2}\|\sigma y + s_1 - (s_1 + \delta)\|^2\right) \right. \nonumber \\
&\quad\quad\left. - \frac{1}{(2\pi\sigma^2)^{d/2}} \exp\left(-\frac{1}{2\sigma^2}\|\sigma y + s_1 - s_1\|^2\right)\right| \sigma^d dy \nonumber \\
&= D_{TV}(\mu_0 \| \nu_0). \nonumber
\end{align}

To evaluate the TV divergence $D_{TV}(\mu_0 \| \nu_0)$, we 
analyze the sign of the difference $p_0(x) - q_0(x)$, where $p_0$ and $q_0$ denotes the probability density functions of $\mu_0$ and $\nu_0$ respectively. Computing the density ratio yields:
\begin{align}
\frac{p_0(x)}{q_0(x)}&= \frac{\frac{1}{(2\pi)^{d/2}} \exp\left(-\frac{1}{2}\|x - \tilde{\delta}\|^2\right)}{\frac{1}{(2\pi)^{d/2}} \exp\left(-\frac{1}{2}\|x\|^2\right)} \nonumber \\
&= \exp\left(-\frac{1}{2}\|x - \tilde{\delta}\|^2 + \frac{1}{2}\|x\|^2\right)  \nonumber \\
&= \exp\left(-\frac{1}{2}(\|x\|^2 - 2x^T\tilde{\delta} + \|\tilde{\delta}\|^2) + \frac{1}{2}\|x\|^2\right)  \nonumber\\
&= \exp\left(x^T\tilde{\delta} - \frac{1}{2}\|\tilde{\delta}\|^2\right).  \nonumber
\end{align}
It follows that $p_0(x) > q_0(x)$ if and only if $x^T\tilde{\delta} > \frac{1}{2}\|\tilde{\delta}\|^2$, which defines a half-space $H = \{x \in \mathbb{R}^d : x^T\tilde{\delta} > \frac{1}{2}\|\tilde{\delta}\|^2\}$. 
As a result, the TV divergence can be expressed as
\begin{align}
D_{TV}(\mu_0 \| \nu_0) &= \frac{1}{2} \int_{\mathbb{R}^d} |p_0(x) - q_0(x)| dx \nonumber \\
&= \frac{1}{2} \left[ \int_H (p_0(x) - q_0(x)) dx + \int_{H^c} (q_0(x) - p_0(x)) dx \right] \nonumber \\
&= \int_H p_0(x) dx - \int_H q_0(x) dx \nonumber \\
&= P(H) - Q(H) \nonumber ,
\end{align}
where $P(H)$ and $Q(H)$ denote denote the probabilities of the half-space under
$\mu_0$ and $\nu_0$, respectively.

We next compute these probabilities by analyzing the distributions of linear combinations $X^\top \tilde{\delta}$ and $Y^\top \tilde{\delta}$ respectively, 
where $X=(X_1,\cdots,X_{d})\sim \mathcal{N}(\tilde{\delta}, I_d)$ and $Y=(Y_1,\cdots,Y_{d}) \sim \mathcal{N}(0, I_d)$. 
Since 
$X^\top \tilde{\delta} \sim \mathcal{N}(\|\tilde{\delta}\|^2, \|\tilde{\delta}\|^2)$ and $Y^\top \tilde{\delta} \sim \mathcal{N}(0, \|\tilde{\delta}\|^2)$, we obtain:
\[
P(H) = \mathbb{P}\left[X^\top \tilde{\delta} > \frac{\|\tilde{\delta}\|^2}{2} \right] 
= \Phi\left( \frac{\|\tilde{\delta}\| }{2} \right),
\]
and
\[
Q(H) = \mathbb{P}\left[Y^\top \tilde{\delta} > \frac{\|\tilde{\delta}\|^2}{2} \right] 
= 1 - \Phi\left( \frac{\|\tilde{\delta}\| }{2} \right),
\]
where $\Phi(\cdot)$ denotes the CDF of the standard normal distribution $\mathcal{N}(0,1)$.
Hence, the total variation is given by
\begin{equation}\label{eq: final relationship}
D_{TV}(\mu\| \nu) 
=D_{TV}(\mu_0 \| \nu_0)= P(H) - Q(H)= 2\Phi\left(\frac{\|\tilde{\delta}\|}{2}\right) - 1.
\end{equation}
Substituting $\tilde{\delta} = \frac{\delta}{\sigma}$ into \eqref{eq: final relationship} gives that
$$D_{TV}(\mu \| \nu) = 2\Phi\left(\frac{\|\delta\|_2}{2\sigma}\right) - 1.$$
Since $\|\delta\|_2 \leq \|\delta\|_1 \leq \epsilon$ and $\Phi$ is monotonically increasing, we obtain:
$$D_{TV}(\mu \| \nu) = 2\Phi\left(\frac{\|\delta\|_2}{2\sigma}\right) - 1 \leq 2\Phi\left(\frac{\|\delta\|_1}{2\sigma}\right) - 1 \leq 2\Phi\left(\frac{\epsilon}{2\sigma}\right) - 1.$$
This completes the proof. 

\end{appendices}


\bibliography{sn-bibliography}

\end{document}